\newcommand{\beq}{\begin{equation}}
\newcommand{\eeq}{\end{equation}}
\newcommand\I{\mathbb{I}}
\newcommand\s{\mathbb{S}}
\newcommand\R{\mathbb{R}}
\renewcommand\P{\mathbb{P}}
\renewcommand{\b}{\mathbf{b}}
\newcommand{\e}{\mathbf{e}}
\newcommand{\f}{\mathbf{f}}
\renewcommand{\r}{\mathbf{r}} 
\renewcommand{\u}{\mathbf{u}}
\renewcommand{\v}{\mathbf{v}}
\newcommand{\x}{\mathbf{x}}
\newcommand{\y}{\mathbf{y}}
\newcommand{\z}{\mathbf{z}}
\newcommand{\cA}{{\cal A}}
\newcommand{\cB}{{\cal B}}
\newcommand{\cC}{{\cal C}}
\newcommand{\cL}{{\cal L}}
\newcommand{\cM}{{\cal M}}
\newcommand{\cN}{{\cal N}}
\newcommand{\cS}{{\cal S}}
\newcommand{\bA}{\mathbf{A}}
\newcommand{\bX}{\mathbf{X}}
\newcommand{\bZ}{\mathbf{Z}}
\newcommand{\bT}{\mathbf{T}}
\newcommand{\vertiii}[1]{{\left\vert\kern-0.25ex\left\vert\kern-0.25ex\left\vert #1
    \right\vert\kern-0.25ex\right\vert\kern-0.25ex\right\vert}}
\newcommand{\E}{\mathbb{E}}
\newcommand{\Th}{\boldsymbol{\theta}}
\DeclareMathOperator{\argmax}{argmax}
\DeclareMathOperator{\argmin}{argmin}
\DeclareMathOperator{\rank}{rank}
\DeclareMathOperator{\diag}{diag}
\DeclareMathOperator{\cov}{Cov}
\newcounter{exampleI}
\theoremstyle{plain} }
\newcounter{exampleII}
\theoremstyle{plain} }
\newcounter{exampleIII}
\theoremstyle{plain} }
\newtheorem{defn}{Definition}}
\newtheorem{theo}{Theorem}
\newtheorem{lemm}{Lemma}
\newcommand{\proof}{\noindent{\itshape Proof:}\hspace*{1em}}
\newcommand{\qed}{\nolinebreak[1]~~~\hspace*{\fill} \rule{5pt}{5pt}\vspace*{\parskip}\vspace*{1ex}}
\newcommand {\commentout}[1] {}
\def\ints{{{\rm Z} \kern -.35em {\rm Z} }}  
\def\smallints{{{\rm Z} \kern -.3em {\rm Z} }}  
\def\pints{{{\rm I} \kern -.15em {\rm N} }}      
\newcommand{\reals}{\mathbb R}
\def\cplx{{{\rm I} \kern -.45em {\rm C} }}       
\def\l2{\rm {\mathcal L}^{2}(\reals)}            
\newtheorem{nad}{Notation and Definitions}[section]
\newcommand{\be}{\begin{eqnarray}}
\newcommand{\ee}{\end{eqnarray}}
\newcommand{\bea}{\begin{eqnarray}}
\newcommand{\eea}{\end{eqnarray}}
\newcommand{\beaa}{\begin{eqnarray*}}
\newcommand{\eeaa}{\end{eqnarray*}}
\newcommand{\bnad}{\begin{nad}}
\newcommand{\enad}{\end{nad}}
\newcommand{\sign}{{\mbox{\rm sign}}}
\renewcommand{\widetilde}{\tilde}
\renewcommand{\widehat}{\hat}
\title{Sparse Linear Isotonic Models}
\date{\today}
\author{Sheng Chen \qquad \qquad Arindam Banerjee \vspace*{2mm}
\\
\{shengc,banerjee@cs.umn.edu\}\vspace*{2mm}\\
Department of Computer Science \& Engineering\\
University of Minnesota, Twin Cities}
\begin{document}

\maketitle

\begin{abstract}
In machine learning and data mining, linear models have been widely used to model the response as parametric linear functions of the predictors. To relax such stringent assumptions made by parametric linear models, additive models consider the response to be a summation of unknown transformations applied on the predictors; in particular,  additive isotonic models (AIMs) assume the unknown transformations to be monotone. In this paper, we introduce \emph{sparse linear isotonic models} (SLIMs) for high-dimensional problems by hybridizing ideas in parametric sparse linear models and AIMs, which enjoy a few appealing advantages over both. In the high-dimensional setting, a two-step algorithm is proposed for estimating the sparse parameters as well as the monotone functions over predictors. Under mild statistical assumptions, we show that the algorithm can accurately estimate the parameters. Promising preliminary experiments are presented to support the theoretical results.

\end{abstract}

\section{Introduction}
\label{sec:intro}
Linear models of the parametric form $y = \langle \tilde{\Th}, \x \rangle + \epsilon$ have enjoyed the enormous popularity in both machine learning and data mining communities for decades, due to its simplicity and interpretability. Here $y \in \R$ and $\x = [x_1, x_2, \ldots, x_p]^T \in \R^p$ are observed response and predictors respectively, and $\epsilon \in \R$ is a zero-mean additive noise. It is well known that the unknown parameter $\tilde{\Th} = [\tilde{\theta}_1, \tilde{\theta}_2, \ldots, \tilde{\theta}_p]^T$ can be estimated from multiple instances of $(\x, y)$, denoted by $\{(\x_i, y_i)\}_{i=1}^n$ , using least squares regression (typically when $n > p$). In recent years, linear models also prove to be able to survive the high-dimensional setting (i.e., $n \ll p$), by exploiting the sparsity of $\tilde{\Th}$ (i.e., $\tilde{\Th}$ has few nonzero entries). Under very mild statistical assumptions, various estimators have been proposed to find $\tilde{\Th}$ with provable guarantees on estimation error, such as Lasso \cite{tibs96,wain09} and Dantzig selector \cite{cata07,birt09}. Despite the prevalent success of linear models, modern data often arise from complex environments in which the linear correlation could break down, leading to poor performance of linear models. Progress has been made to relax the stringent assumption of linear models by allowing nonlinearity. In particular, \cite{bacc89} consider the following \emph{additive isotonic models} (AIMs),
\beq
\label{add_model}
y = \sum_{j=1}^p f_j(x_j) + \epsilon ~,
\eeq
where $\{f_j\}_{j=1}^p \triangleq \cal F$ is a set of \emph{monotone} univariate functions. To estimate $\cal F$, a commonly-used procedure is \emph{cyclic pooled adjacent violators} (CPAV). At each iteration of CPAV, \emph{isotonic regression} is called to estimate one $f_j$ and its solution can be efficiently found by the \emph{pooled adjacent violators algorithm} (PAVA) \cite{barl72}. Though the non-linearity can be captured by $\cal F$, one need to specify the monotonicity for each $f_j$ (either increasing or decreasing) in advance, which could be unknown in real-world applications, and enumerating all possible combinations can be computationally prohibitive. In high dimension, the estimation of $\cal F$ becomes even more challenging, because the number of monotone functions is very large.

To address the challenges in AIMs, we propose the \emph{sparse linear isotonic models} (SLIMs), which assume
\beq
\E \left[y | \x\right]  = \sum_{j = 1}^p \tilde{\theta}_j f_j(x_j) = \left\langle \tilde{\Th}, \f(\x) \right\rangle ~,
\eeq
where $\f(\x) \triangleq \tilde{\x} = [f_1(x_1), \ldots, f_p(x_p)]^T$. SLIMs combine the parametric form from the sparse linear models with the monotone transformations from AIMs, and generalize the assumption of additive noise $\epsilon$ to the conditional expectation form $\E[y |\x]$. Throughout the paper, the parameter $\tilde{\Th}$ is assumed to be \emph{$s$-sparse}. For identifiability, we also assume w.l.o.g. that each $f_j$ is monotonically increasing (as the monotonicity can be flipped by changing signs of $\tilde{\theta}_j$), and properly normalized such that every $\tilde{x}_j = f_j(x_j)$ is zero-mean and unit-variance. Note that without losing any representational power of AIM, the assumption of increasing $f_j$ avoids the pre-specification of monotonicity for each $f_j$ as required in \eqref{add_model}. For such hybrid model, given $n$ i.i.d. samples $\{(\x_i, y_i)\}_{i=1}^n$, our goal is to estimate both $\tilde{\Th}$ and $\cal F$. Since the hidden predictor $\tilde{\x}$ is inaccessible, brutally fitting data into a linear model could result in a poor estimate of $\tilde{\Th}$. In this work, we design a two-step algorithm to accomplish this goal, which estimates $\tilde{\Th}$ followed by $\cal F$. The estimation of $\tilde{\Th}$ is inspired by the \emph{rank-based} approaches for structure learning of graphical models. At the high level, those approaches do not rely on the exact values of samples generated from the graphical model, in order to learn its structure. Instead they resort to rank correlations (e.g., \emph{Kendall's tau correlation}  \cite{kend48}) that are invariant under monotonically increasing transformation, so that observing $\x$ and $\tilde{\x}$ makes no difference to the method. By leveraging a similar idea, we propose the Kendall's tau Dantzig selector (KDS) to estimate $\tilde{\Th}$, with certain Kendall's tau correlation coefficients appropriately plugged in. Under some distributional assumptions, we show that this estimator is guaranteed to recover a normalized version of $\tilde{\Th}$ with small error. After $\tilde{\Th}$ is estimated, we have a CPAV-type algorithm tailored for estimating transformations $\cal F$, which efficiently extends CPAV at little cost.

To sum up, we highlight a few merits of SLIM as follows. Firstly, as aforementioned, SLIM need not specify the monotonicity of $f_j$ whereas AIM requires.
Secondly the two-step estimation for SLIM is particularly useful in high-dimensional settings. The estimation of $\tilde{\Th}$ may identify many ``don't-care'' $f_j$'s as their corresponding $\tilde{\theta}_j$'s are zero, thus reducing the problem size of estimating $\cal F$. Besides, estimating $\tilde{\Th}$ will suffice if one only focuses on variable selection.
For the ease of exposition, we introduce a few notations which will be used in the rest of the paper. We let $\y = [y_1, y_2, \ldots, y_n]^T \in \R^n$ be the response vector, $\bX = [\x_1, \x_2, \ldots, \x_n]^T \in \R^{n \times p}$ be the observed design matrix , and denote its columns by $\x^{j} \in \R^n$. Similarly $\tilde{\bX}$, $\tilde{\x}_i$ and $\tilde{\x}^{j}$ will denote the hidden counterpart of $\bX$, $\x_i$ and $\x^{j}$. Matrix is bold capital, and the corresponding bold lowercase is reserved for its rows (columns) with suitable subscripts (superscripts), and its entries are plain lowercase with subscripts indexing both row and column. In general, vectors are bold lowercase while scalars are plain lowercase. For a matrix, $\|\cdot\|_{\max}$ denotes the value of the largest entry in magnitude. The rest of the paper is organized as follows: we first review the related work in Section \ref{sec:rel}, and then provide an overview of the two-step algorithm for SLIM in Section \ref{sec:alg}. Next we analyze the recovery of $\tilde{\Th}$ and present the algorithmic details for estimating $\cal F$ in Section \ref{sec:analysis}. In Section \ref{sec:exp}, we demonstrate the effectiveness of SLIM through experiments. Section \ref{sec:conc} is dedicated to the conclusion. 

\section{Related Work}
\label{sec:rel}
AIM was initially proposed in \cite{bacc89}. \cite{mayu07} established the asymptotic properties of the CPAV procedure. The high-dimensional counterpart of AIMs (i.e., assuming most of $f_j$'s are zero), Lasso ISO (LISO), was studied by \cite{fame12}, where a modified CPAV is used to achieved the sparsity in $\cal F$. \cite{chen09} considered a semiparametric additive isotonic model by introducing an additional parametric model into \eqref{add_model}. On the other hand, \cite{hati90} considered an additive model of the same form as \eqref{add_model} for general $\cal F$. With suitable smoothing operator on $f_j$'s, a coordinate descent procedure called \emph{backfitting} can be applied to estimating $\cal F$. In high-dimensional regime , \cite{rllw09} correspondingly investigated the sparse additive models (SpAMs), which is solved a backfitting algorithm with extra soft-thresholding steps. Many other efforts have been spent by relying on the smoothness of $f_j$'s, including \cite{lizh06}, \cite{mevb09}, \cite{homa04}, and etc.

The method we use to estimate $\tilde{\Th}$ is closely related to the high-dimensional structure learning of graphical models. For sparse Gaussian graphical model, \cite{mebu06} proposed a neighborhood selection procedure for estimating the graph structure, which iteratively regresses each variable against the rest via Lasso. The neighborhood Dantzig selector \cite{yuan10} shares the similar spirit with this approach, which switches Lasso to Dantzig selector. Recent progress has shown that these approaches continue to work for some non-Gaussian distributions, such as \emph{nonparanormal distribution} \cite{lilw09}, by using rank correlations to approximate the latent correlation matrix \cite{xuzo12,lhyl12}. Similar results have been further generalized to \emph{transelliptical distribution} \cite{lihz12,hali13,hali14}.

\section{Overview of Two-Step Algorithm}
\label{sec:alg}
In this section, we present an overview of the two-step algorithm for the estimation of SLIM, which first estimates $\tilde{\Th}$ and then $\cal F$. Detailed analyses are deferred to Section \ref{sec:analysis}.

For the estimation of $\tilde{\Th}$, if the hidden design matrix $\tilde{\bX}$ could be observed, Dantzig selector \cite{cata07} can be used to estimate $\tilde{\Th}$ as normal linear models,
\beq
\label{eq:orc_dantzig}
\hat{\Th}_{\text{orc}} = \ \underset{\Th \in \R^p}{\argmin} \  \|\Th\|_1 \ \ \ \  \text{s.t.} \ \ \ \  \left\| \frac{1}{n} \tilde{\bX}^T\left(\tilde{\bX} \Th - \y \right) \right\|_{\infty} \leq \gamma_n ~,
\eeq
where $\gamma_n$ is a tuning parameter. A key observation from \eqref{eq:orc_dantzig} is that instead of exactly knowing $\tilde{\bX}$ and $\y$, it is sufficient to be given the (approximate) value of $\frac{\tilde{\bX}^T \tilde{\bX}}{n}$ and $\frac{\tilde{\bX}^T \y}{n}$ in order for \eqref{eq:orc_dantzig} to work. Note that the quantity $\frac{\tilde{\bX}^T \tilde{\bX}}{n}$ and its expectation $\tilde{\mathbf{\Sigma}} = \E [\tilde{\x}  \tilde{\x}^T]$ also arise in the structure learning of nonparanormal graphical models. Specifically if $\tilde{\x}$ follows a multivariate Gaussian $\cN(\mathbf{0}, \tilde{\mathbf{\Sigma}})$, then the observed predictor $\x$, represented as $\f^{-1}(\tilde{\x}) \triangleq [f_1^{-1}(\tilde{x}_1), \ldots, f_1^{-1}(\tilde{x}_p)]^T$, is by definition a nonparanormal distribution $NPN(\tilde{\mathbf{\Sigma}}, \f^{-1})$, in which $\tilde{\mathbf{\Sigma}}$ is often called \emph{latent correlation matrix}. Simply speaking, the nonparanormal distribution models the random vector whose coordinates are element-wise monotone transformations of a Gaussian random vector. To estimate $\tilde{\mathbf{\Sigma}}$ without knowing $\f$ or $\f^{-1}$, Kendall's tau correlation coefficient \cite{kend48} plays a key role in rank-based methods. Given data $\bX = [x_{ij}] \in \R^{n \times p}$, we define the \emph{sample Kendall's tau correlation matrix} $\hat{\bT} = [\hat{t}_{ij}] \in \R^{p \times p}$  as
\begin{gather}
\label{eq:sample_kendall_mat}
\hat{t}_{ij}  = \sum_{1\leq k, k' \leq n}  \frac{\sign ((x_{ki} - x_{k'i})(x_{kj} - x_{k'j}))}{n(n-1)} ~,
\end{gather}
and its transformed version $\hat{\mathbf{\Sigma}} = [\hat{\sigma}_{ij}] \in \R^{p \times p}$,
\begin{gather}
\label{eq:trans_sample_kendall_mat}
\hat{\sigma}_{ij} = \sin \left( \frac{\pi}{2} \hat{t}_{ij} \right), \ \ \ \
\end{gather}
\begin{algorithm}[t]
\renewcommand{\algorithmicrequire}{\textbf{Input:}}
\renewcommand{\algorithmicensure} {\textbf{Output:}}
\caption{Estimating $\tilde{\Th}$ and $\cal F$ for SLIM}
\label{alg:overview}
\begin{algorithmic}[1]
\REQUIRE $\bX \in \R^{n \times p}$, $\y \in \R^n$,  tuning parameter $\gamma_n$  \ \\
\ENSURE Estimated $\hat{\Th}$ for $\tilde{\Th}$ and $\hat{\cal F}$ for $\cal F$ \\
\STATE Compute the transformed sample Kendall's tau correlation matrix $\hat{\mathbf{\Sigma}}$ and vector $\hat{\boldsymbol{\beta}}$ using \eqref{eq:sample_kendall_mat} - \eqref{eq:trans_sample_kendall_vec}
\STATE Estimate $\check{\Th}$ via Kendall's tau Dantzig selector \eqref{eq:kds}
\STATE $\hat{\Th} := \hat{\sigma}_y \check{\Th}$, where $\hat{\sigma}_y$ is the sample variance of $\y$
\STATE Estimate the hidden design $\hat{\bX}$ via \eqref{eq:lsq_nrm}
\STATE $\hat{\cal F} := \{ \hat{f}_j \}_{j=1}^p$, where $\hat{f}_j$ is given by \eqref{eq:interpolate}
\STATE \textbf{Return} \ $\hat{\Th}$ and $\hat{\cal F}$
\end{algorithmic}
\end{algorithm}
One straightforward yet critical property of $\hat{\bT}$ and $\hat{\mathbf{\Sigma}}$ is the invariance to monotone increasing transformations on columns of $\bX$, indicating that the two quantities remain unchanged if $\bX$ is replaced by $\tilde{\bX}$ in the definitions. More importantly, later analysis will reveal for the class of transelliptical distributions (a generalization of nonparanormal distribution) the closeness between the transformed sample Kendall's tau correlation matrix $\hat{\mathbf{\Sigma}}$ and the latent correlation matrix $\tilde{\mathbf{\Sigma}}$, thus $\hat{\mathbf{\Sigma}}$ can serve as an approximation to $\frac{\tilde{\bX}^T \tilde{\bX}}{n}$ as $\frac{\tilde{\bX}^T \tilde{\bX}}{n} \approx \tilde{\mathbf{\Sigma}}$ in expectation. For $\frac{\tilde{\x}^T \y}{n}$ and its expectation $\tilde{\boldsymbol{\beta}} = \E[y\tilde{\x}] = \tilde{\mathbf{\Sigma}} \tilde{\Th}$, we similarly define the \emph{sample Kendall's tau correlation vector} $\hat{\b} \in \R^p$ and its transformation $\hat{\boldsymbol{\beta}}$
\begin{gather}
\label{eq:sample_kendall_vec}
\hat{b}_j = \sum_{1\leq k, k' \leq n} \frac{\sign ((x_{kj} - x_{k'j})(y_{k} - y_{k'}))}{n(n-1)} ~, \\
\label{eq:trans_sample_kendall_vec}
\hat{\beta}_j = \sin \left( \frac{\pi}{2} \hat{b}_{j} \right) ~,
\end{gather}
and use $\hat{\boldsymbol{\beta}}$ as a replacement for $\frac{\tilde{\x}^T \y}{n}$. Therefore the estimation of $\tilde{\Th}$ can proceed with \eqref{eq:orc_dantzig} by replacing  $\frac{\tilde{\bX}^T \tilde{\bX}}{n}$ and $\frac{\tilde{\bX}^T \y}{n}$ with $\hat{\mathbf{\Sigma}}$ and $\hat{\boldsymbol{\beta}}$ respectively, which leads to the following estimator which we call \emph{Kendall's tau Dantzig selector} (KDS),
\beq
\label{eq:kds}
\check{\Th} = \ \underset{\Th \in \R^p}{\argmin} \ \ \|\Th\|_1 \ \ \ \text{s.t.} \ \ \ \left\| \widehat{\mathbf{\Sigma}} \Th - \hat{\boldsymbol{\beta}} \right\|_{\infty} \leq \gamma_n ~.
\eeq
But it will be shown later in the analysis that the $\check{\Th}$ only approximates the direction of $\tilde{\Th}$, and the scale should be attached on the final estimate $\hat{\Th}$ by calculating the sample variance of $\y$.

To estimate the transformations $\cal F$, one needs to first find out an $\hat{\bX} = [\hat{x}_{ij}]$ that approximates the hidden design $\tilde{\bX} = [f_j(x_{ij})]$ for the observed $\bX = [x_{ij}]$, which essentially gives us the estimated values of each $f_j$ at $n$ points $x_{1j}, \ldots, x_{nj}$. To be specific, we fit $\hat{\bX}$ into $\y$ and the estimated $\hat{\Th}$ through the following convex program,
\beq
\begin{gathered}
\label{eq:lsq_nrm}
\hat{\bX} \ = \ \underset{\bZ \in \R^{n \times p}}{\argmin} \ \ \frac{1}{2} \|\bZ \hat{\Th} - \y\|_2^2 \ \ \ \ \text{s.t.} \ \ \ \ \z^j \in \cM(\x^j), \ \mathbf{1}^T \z^j = 0, \ \|\z^j\|_2 \leq \sqrt{n} ~,
\end{gathered}
\eeq
where $\cM(\x) = \{ \v \ | \ v_i \geq v_j \ \text{iff} \ x_i \geq x_j, \ \forall \ i,j \}$. In order to get the $f_j$ defined everywhere, we need to interpolate the $n$ estimated points $\hat{x}_{1j}, \ldots, \hat{x}_{nj}$. In the algorithm, we simply use nearest-neighbor interpolation as follows,
\beq
\label{eq:interpolate}
\hat{f}_j(x) = \sum_{i=1}^n \hat{x}_{ij} \cdot  \I \Big \{ i = \underset{1\leq k \leq n}{\argmin} \ |x_{kj} - x| \Big \} ~,
\eeq
where $\I\{\cdot\}$ is the indicator function that outputs one if the predicate is true and zero otherwise. Other interpolation technique, e.g., linear/spline interpolation, can be applied in the need of certain desired properties of $f_j$. The full estimation algorithm is given in Algorithm \ref{alg:overview}.

\section{Theoretical Analysis}
\label{sec:analysis}
In this section, we detail the Algorithm \ref{alg:overview} in several aspects. We analyze the recovery guarantee of the Kendall's tau Dantzig selector $\check{\Th}$ for estimating $\tilde{\Th}$. Under suitable assumptions on the distribution of $\x$, $y$, we show that the sample complexity analysis can be sharpened compared to earlier related work \cite{xuzo12,lhyl12}. For estimating $\cal F$, we propose a backfitting algorithm similar to CPAV, where each step can be solved nearly at the same cost as isotonic regression.

\subsection{Estimating $\tilde{\Th}$}
In this subsection, we consider the estimation of $\tilde{\Th}$. The KDS \eqref{eq:kds} can be casted as a linear program, which can be solved efficiently by many optimization algorithms \cite{chcb14,lebb16}. Hence we focus on the statistical aspect of KDS. From Section \ref{sec:alg}, we know that the success of KDS relies on $\hat{\mathbf{\Sigma}}$ and $\hat{\boldsymbol{\beta}}$, which replace $\frac{\tilde{\bX}^T \widetilde{\bX}}{n}$ and $\frac{\widetilde{\bX}^T \y}{n}$ in the Dantzig selector \eqref{eq:orc_dantzig}. Hence we first investigate the property of $\hat{\mathbf{\Sigma}}$ and $\hat{\boldsymbol{\beta}}$.
The definition \eqref{eq:sample_kendall_mat} - \eqref{eq:trans_sample_kendall_vec} are sample versions of (transformed) Kendall's tau correlation matrix and vector. Here we define their population counterparts.
\begin{defn}
Given $(\x, y)$ and its independent copy $(\x', y')$, the population Kendall's tau correlation matrix $\bT = [t_{ij}] \in \R^{p \times p}$ and vector $\b \in \R^p$ are defined as
\begin{align}
\label{pop_kendall_mat}
&t_{ij}  = \P \left( (x_i - x'_i)(x_j - x'_j) > 0 \right)  - \P \left( (x_i - x'_i)(x_j - x'_j) < 0 \right) , \\
&b_j = \P \left( (x_j - x'_j)(y - y') > 0 \right) - \P \left( (x_j - x'_j)(y - y') < 0 \right) ,
\end{align}
and their transformed versions $\mathbf{\Sigma} = [\sigma_{ij}] \in \R^{p \times p}$ and $\boldsymbol{\beta} \in \R^p$ are given by
\begin{gather}
\sigma_{ij} = \sin \left( \frac{\pi}{2} t_{ij} \right) ~, \\
\beta_{j} = \sin \left(\frac{\pi}{2} b_j \right)~.
\end{gather}
\end{defn}
Then we introduce two family of distributions, \emph{elliptical} and \emph{transelliptical}. The transelliptical distribution is defined based on the elliptical distribution given as follows.
\begin{defn}[Elliptical distribution]
A random vector $\z \in \R^p$ follows an elliptical distribution $EC(\boldsymbol{\mu}, \tilde{\mathbf{\Sigma}}, \xi)$ iff $\z$ has a stochastic representation: $\z \sim \boldsymbol{\mu} + \xi \bA \u$. Here $\boldsymbol{\mu} \in \R^p$, $q \triangleq \rank(\bA)$, $\bA \in \R^{p \times q}$, $\xi \geq 0$ is a random variable independent of $\u$, $\u \in \s^{q-1}$ is uniformly distributed on the unit sphere in $\R^q$, and $\bA \bA^T = \tilde{\mathbf{\Sigma}}$. Note that
\beq
\label{eq:elliptical_prop}
\E[\z] = \boldsymbol{\mu} ~, \ \ \ \ \ \ \cov[\z] = \frac{\E[\xi^2]}{q} \tilde{\mathbf{\Sigma}} ~.
\eeq
\end{defn}
This family of distribution contains the Gaussian distribution as a special case, and more details can be found in \cite{fakn90}. The extension from elliptical to transelliptical distribution parallels that from normal to nonparanormal distribution.
\begin{defn}[Transelliptical distribution]
A random vector $\x \in \R^p$ follows the transelliptical distribution $TE(\tilde{\mathbf{\Sigma}}, \xi, \f)$ if $\f(\x) = [f_1(x_1), f_2(x_2), \ldots, f_p(x_p)]^T \sim EC(\boldsymbol{\mu}, \tilde{\mathbf{\Sigma}}, \xi)$, where $f_1, f_2, \ldots f_p$ are all strictly increasing functions, $\boldsymbol{\mu} = \mathbf{0}$, $\diag(\tilde{\mathbf{\Sigma}}) = \mathbf{I}$, and $\P(\xi = 0) = 0$.
\end{defn}
The conditions on $\boldsymbol{\mu}$ and $\diag(\tilde{\mathbf{\Sigma}})$ are imposed for identifiability. If the underlying elliptical distribution is multivariate Gaussian, then the transelliptical family is reduced to the nonparanormal. We refer the readers to \cite{lihz12} for more discussions on transelliptical distribution. Based on the elliptical and transelliptical family, we introduce our assumptions on distribution of $(\x, y)$:
\begin{adjustwidth}{5mm}{}
\vspace{+2mm}
(\textbf{A1}) \quad $\x \in \R^p$ follows a transelliptical distribution $TE(\tilde{\mathbf{\Sigma}}, \xi, \f)$ (or equivalently $\tilde{\x} = \f(\x)$ follows a elliptical distribution $EC(\mathbf{0}, \tilde{\mathbf{\Sigma}}, \xi)$), and $\E [\xi^2] = p$. 

(\textbf{A2}) \quad The smallest eigenvalue $\lambda_{\min}$ of $\tilde{\mathbf{\Sigma}}$ is strictly positive, i.e., $\tilde{\x}$ is nondegenerate.

(\textbf{A3}) \quad $\tilde{\x}$ and $y$ are jointly elliptically distributed.
\vspace{+2mm}
\end{adjustwidth}
The assumption $\E [\xi^2] = p$ is also out of the consideration of identifiability. The assumption (\textbf{A3}) on the joint distribution of $(\tilde{\x}, y)$ may seem obscure. But it can be satisfied, for example, when $\x$ is nonparanormal and $y$ is a noisy observation of $\langle \tilde{\Th}, \f(\x) \rangle$ perturbed by an additive zero-mean Gaussian noise. Under these assumptions, one notable result that has been shown for $\mathbf{\Sigma}$, $\hat{\mathbf{\Sigma}}$ and $\tilde{\mathbf{\Sigma}}$ is given in the following lemma.

\begin{lemm}
\label{lem:kendall_mat}
For $\x \sim TE(\tilde{\mathbf{\Sigma}}, \xi, \f)$, the transformed population Kendall's tau correlation matrix $\mathbf{\Sigma}$ satisfies
\beq
\mathbf{\Sigma} = \tilde{\mathbf{\Sigma}} ~,
\eeq
and the sample version $\hat{\mathbf{\Sigma}}$ for $\mathbf{\Sigma}$ defined in \eqref{eq:trans_sample_kendall_mat}, with probability at least $1 - p^{-2.5}$, satisfies
\beq
\|\widehat{\mathbf{\Sigma}} - \tilde{\mathbf{\Sigma}}\|_{\max} \leq 3 \pi \sqrt{\frac{\log p}{n}}
\eeq
\end{lemm}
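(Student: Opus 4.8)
The plan is to prove the two assertions separately, reducing everything to the elliptical case first.

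\emph{Step 1: reduction to the elliptical case.} Since each $f_j$ is strictly increasing, $\sign\big((x_{ki}-x_{k'i})(x_{kj}-x_{k'j})\big)=\sign\big((\tilde x_{ki}-\tilde x_{k'i})(\tilde x_{kj}-\tilde x_{k'j})\big)$ for all $k,k'$, so both the population quantities $t_{ij}$ and the sample quantities $\hat t_{ij}$ are unchanged if $\x$ is replaced by $\tilde{\x}=\f(\x)\sim EC(\mathbf{0},\tilde{\mathbf{\Sigma}},\xi)$. It therefore suffices to treat the elliptical vector $\tilde{\x}$ with unit-diagonal dispersion matrix $\tilde{\mathbf{\Sigma}}$.

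\emph{Step 2: the identity $\mathbf{\Sigma}=\tilde{\mathbf{\Sigma}}$.} Write $\tilde{\x}=\xi\bA\u$ with $\bA\bA^{T}=\tilde{\mathbf{\Sigma}}$ and $\u$ uniform on $\s^{q-1}$, and let $\tilde{\x}'=\xi'\bA\u'$ be an independent copy. Then $\tilde{\x}-\tilde{\x}'=\bA(\xi\u-\xi'\u')$, and $\xi\u-\xi'\u'$ is a difference of two independent spherically symmetric vectors in $\R^{q}$, hence itself spherically symmetric; thus it equals $R\v$ in distribution for $\v$ uniform on $\s^{q-1}$ and $R\ge 0$ independent of $\v$, with $R>0$ almost surely (atomlessness of the marginals, which follows from $\P(\xi=0)=0$ and nondegeneracy of the relevant $2\times2$ block of $\tilde{\mathbf{\Sigma}}$, rules out ties). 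Writing $\a_{i},\a_{j}$ for the $i$-th and $j$-th rows of $\bA$ (so $\|\a_{i}\|=\|\a_{j}\|=1$ and $\a_{i}^{T}\a_{j}=\tilde\sigma_{ij}$), we get $\sign\big((\tilde x_{i}-\tilde x_{i}')(\tilde x_{j}-\tilde x_{j}')\big)=\sign\big((\a_{i}^{T}\v)(\a_{j}^{T}\v)\big)$. The crux is the spherical-geometry fact that for $\v$ uniform on the sphere, $\P(\a_{i}^{T}\v>0,\ \a_{j}^{T}\v>0)$ equals the normalized measure of the intersection of the two hemispheres whose normals subtend angle $\theta=\arccos(\tilde\sigma_{ij})$, namely $(\pi-\theta)/(2\pi)$. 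Combining this with the antipodal symmetry $\v\mapsto-\v$ gives $t_{ij}=4\,\P(\a_{i}^{T}\v>0,\ \a_{j}^{T}\v>0)-1=1-\tfrac{2}{\pi}\arccos(\tilde\sigma_{ij})=\tfrac{2}{\pi}\arcsin(\tilde\sigma_{ij})$, whence $\sigma_{ij}=\sin(\tfrac{\pi}{2}t_{ij})=\tilde\sigma_{ij}$; the diagonal entries are trivially equal to $1$ on both sides.

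\emph{Step 3: concentration of $\hat{\mathbf{\Sigma}}$.} For fixed $i,j$, $\hat t_{ij}$ is a U-statistic of order two with symmetric kernel taking values in $[-1,1]$ and expectation $t_{ij}$ (the $k=k'$ terms vanish). Hoeffding's inequality for bounded U-statistics gives $\P(|\hat t_{ij}-t_{ij}|\ge t)\le 2\exp(-\lfloor n/2\rfloor t^{2}/2)$. Since $u\mapsto\sin(\tfrac{\pi}{2}u)$ is $\tfrac{\pi}{2}$-Lipschitz, $|\hat\sigma_{ij}-\tilde\sigma_{ij}|\le\tfrac{\pi}{2}|\hat t_{ij}-t_{ij}|$ (using Step 2 to identify $\tilde\sigma_{ij}=\sigma_{ij}=\sin(\tfrac{\pi}{2}t_{ij})$), so $\P(|\hat\sigma_{ij}-\tilde\sigma_{ij}|\ge s)\le 2\exp(-2\lfloor n/2\rfloor s^{2}/\pi^{2})$. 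With $s=3\pi\sqrt{\log p/n}$ and $\lfloor n/2\rfloor\ge n/3$ the per-entry exponent is at most $-6\log p$, so the union bound over the at most $p^{2}$ entries yields $\P(\|\hat{\mathbf{\Sigma}}-\tilde{\mathbf{\Sigma}}\|_{\max}\ge 3\pi\sqrt{\log p/n})\le 2p^{2}\cdot p^{-6}\le p^{-2.5}$, as claimed.

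\emph{Main obstacle.} The delicate part is Step 2: one must justify that the direction of $\tilde{\x}-\tilde{\x}'$ is uniform on the sphere (spherical symmetry of a difference of independent spherical vectors, together with atomlessness to rule out ties) and then correctly evaluate the hemisphere-intersection measure as $(\pi-\theta)/(2\pi)$. Once those are in place the identity $\sigma_{ij}=\tilde\sigma_{ij}$ follows immediately, and Step 3 is routine U-statistic bookkeeping; the generous constant $3\pi$ leaves ample slack in the union bound.
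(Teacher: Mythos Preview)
Your proof is correct. The paper itself does not prove this lemma in-text: it simply states that ``the lemma is essentially Theorem 3.2 and 4.1 in \cite{hali14}.'' Your Steps~1--3 reconstruct precisely the standard argument underlying those cited theorems---invariance of Kendall's tau under strictly increasing marginal transforms, the sine identity $t_{ij}=\tfrac{2}{\pi}\arcsin\tilde\sigma_{ij}$ for elliptical laws via spherical symmetry of $\tilde\x-\tilde\x'$, and Hoeffding's inequality for bounded order-two U-statistics followed by the $\tfrac{\pi}{2}$-Lipschitz property of $\sin(\tfrac{\pi}{2}\cdot)$ and a union bound---so your write-up is a faithful and self-contained substitute for the citation.
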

The lemma is essentially Theorem 3.2 and 4.1 in \cite{hali14}. Similarly we have the following lemma for $\boldsymbol{\beta}$, $\hat{\boldsymbol{\beta}}$ and $\tilde{\boldsymbol{\beta}}$.
\begin{lemm}
\label{lem:kendall_vec}
Under assumptions (\textbf{A1}) - (\textbf{A3}) for SLIMs, the transformed population Kendall's tau correlation vector $\boldsymbol{\beta}$ satisfies
\beq
\label{pop_kendall_vec}
\boldsymbol{\beta} = \frac{\tilde{\boldsymbol{\beta}}}{\sigma_y} = \frac{\tilde{\mathbf{\Sigma}} \tilde{\Th}}{\sigma_y}  ~,
\eeq
where $\sigma_y^2$ is the variance of $y$. The transformed sample Kendall's tau correlation vector $\hat{\boldsymbol{\beta}}$, with probability at least $1 - \frac{2}{p}$, satisfies
\beq
\|\hat{\boldsymbol{\beta}} - \boldsymbol{\beta}\|_{\infty} \leq 2 \pi \sqrt{\frac{\log p}{n}}
\eeq
\end{lemm}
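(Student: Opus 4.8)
The plan is to prove the population identity \eqref{pop_kendall_vec} and the finite-sample deviation bound separately, since the two parts are essentially independent.

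For the identity, I would first dispatch the ``moment'' side. Because $\f$ is a coordinatewise strictly increasing, hence invertible, map, conditioning on $\x$ and on $\tilde{\x}=\f(\x)$ are equivalent, so the SLIM assumption gives $\E[y\mid\tilde{\x}]=\langle\tilde{\Th},\tilde{\x}\rangle$; combining the tower rule with the normalization $\E[\tilde{\x}\tilde{\x}^T]=\tilde{\mathbf{\Sigma}}$ (which follows from (\textbf{A1}) with $\E[\xi^2]=p$ and the nondegeneracy in (\textbf{A2})) yields $\tilde{\boldsymbol{\beta}}=\E[y\tilde{\x}]=\tilde{\mathbf{\Sigma}}\tilde{\Th}$. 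It then remains to show $\boldsymbol{\beta}=\tilde{\boldsymbol{\beta}}/\sigma_y$. The key observations are: (i) a strictly increasing transformation preserves the sign of pairwise differences, so $b_j$, defined through $\sign((x_j-x_j')(y-y'))$, is exactly the population Kendall's tau between $\tilde{x}_j$ and $y$; (ii) by (\textbf{A3}) the pair $(\tilde{x}_j,y)$ is bivariate elliptical, and since $\P(\xi=0)=0$ excludes point masses and hence ties, the classical elliptical Kendall's-tau--correlation identity applies, giving $b_j=\tfrac{2}{\pi}\arcsin(\rho_{jy})$ with $\rho_{jy}$ the Pearson correlation of $(\tilde{x}_j,y)$; (iii) correlation is invariant to the scale ambiguity of the elliptical dispersion matrix, so $\rho_{jy}=\E[\tilde{x}_j y]/\sigma_y=\tilde{\beta}_j/\sigma_y$ because $\tilde{x}_j$ is zero-mean with unit variance. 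Applying $\sin(\tfrac{\pi}{2}\,\cdot\,)$ to $b_j$ then gives $\beta_j=\tilde{\beta}_j/\sigma_y$, and stacking over $j$ proves \eqref{pop_kendall_vec}.

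For the deviation bound, I would note that $\hat{b}_j$ is a $U$-statistic of order two: the $k=k'$ terms vanish and symmetry of the summand turns the double sum into the usual $\binom{n}{2}^{-1}$ average of the bounded kernel $h\big((x,y),(x',y')\big)=\sign((x-x')(y-y'))\in[-1,1]$, with $\E[\hat{b}_j]=b_j$. Hoeffding's inequality for $U$-statistics then gives $\P(|\hat{b}_j-b_j|\geq t)\leq 2\exp(-nt^2/8)$, after using $\lfloor n/2\rfloor\geq n/4$ and that the kernel has range at most $2$. Since $\sin$ is $1$-Lipschitz, $|\hat{\beta}_j-\beta_j|=|\sin(\tfrac{\pi}{2}\hat{b}_j)-\sin(\tfrac{\pi}{2}b_j)|\leq\tfrac{\pi}{2}|\hat{b}_j-b_j|$, so choosing $t=4\sqrt{\log p/n}$ gives $\P\big(|\hat{\beta}_j-\beta_j|\geq 2\pi\sqrt{\log p/n}\big)\leq 2/p^2$, and a union bound over $j=1,\dots,p$ produces the claimed probability $1-2/p$.

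I expect the population step to be the main obstacle --- specifically, rigorously justifying the arcsine identity $b_j=\tfrac{2}{\pi}\arcsin(\rho_{jy})$ as an equality rather than a one-sided bound. This requires checking that the bivariate marginal $(\tilde{x}_j,y)$ inherits ellipticity from (\textbf{A3}), that $\P(\xi=0)=0$ indeed forbids ties in $\sign((\tilde{x}_j-\tilde{x}_j')(y-y'))$ so that the tau has the clean arcsine form, and that passing to the correlation $\rho_{jy}$ properly absorbs the scale freedom in the elliptical dispersion matrix so that the unit-variance normalization of $\tilde{x}_j$ can be invoked. By contrast, the deviation bound is routine given standard $U$-statistic concentration together with the Lipschitz property of $\sin$.
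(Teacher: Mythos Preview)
Your proposal is correct and follows essentially the same route as the paper: the population identity is obtained from the arcsine relation between Kendall's tau and Pearson correlation for elliptical pairs (the paper invokes Theorem~2 of \cite{lims03} for this), combined with $\E[y\tilde{x}_j]=\langle\tilde{\Th},\tilde{\boldsymbol{\sigma}}_j\rangle$, while the deviation bound comes from Hoeffding's inequality for $U$-statistics plus a union bound. Your treatment is in fact slightly more explicit than the paper's, which applies Hoeffding directly to $|\hat{\beta}_j-\beta_j|$ without separating out the Lipschitz step through $\sin(\tfrac{\pi}{2}\,\cdot\,)$; your decomposition into $|\hat{b}_j-b_j|$ followed by the $\tfrac{\pi}{2}$-Lipschitz bound is the cleaner way to write it.
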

\proof Given that $\lambda_{\min} > 0$ and \eqref{eq:elliptical_prop}, we have $\E[\tilde{\x}] = \mathbf{0}$, $\rank(\mathbf{A}) = \rank(\tilde{\mathbf{\Sigma}}) = p$ and $\cov[\tilde{\x}] = \tilde{\mathbf{\Sigma}}$. Since $\tilde{\x}$, $y$ are jointly elliptical and $\boldsymbol{\beta}$ is invariant to $\f$, using Theorem 2 in \cite{lims03}, we have for each $\beta_j$,
\begin{align*}
\beta_j = \frac{\E [y \tilde{x}_j] - \E[y] \E[\tilde{x}_j]}{\sqrt{\text{Var} [y]} \sqrt{\text{Var}  [\tilde{x}_j]}} = \frac{\E \left[ \langle \tilde{\Th}, \tilde{\x} \rangle \cdot \tilde{x}_j \right]}{\sqrt{\text{Var} [y]}}  = \frac{\langle \tilde{\Th}, \tilde{\boldsymbol{\sigma}}_{j} \rangle}{\sigma_y} ~,
\end{align*}
which implies \eqref{pop_kendall_vec}. Using Hoeffding's inequality for U-statistics, we have for each $\beta_j$ and $\hat{\beta}_j$
\begin{align*}
\P \left( \left| \beta_j - \hat{\beta}_j \right| \geq \epsilon \right) \leq 2 \exp \left( -\frac{n \epsilon^2}{2 \pi^2} \right) ~.
\end{align*}
Letting $\epsilon = 2 \pi \sqrt{\frac{\log p}{n}}$ and taking union bound, we obtain
\begin{align*}
\P \left( \left\| \boldsymbol{\beta} - \hat{\boldsymbol{\beta}} \right\|_{\infty} \geq  2 \pi \sqrt{\frac{\log p}{n}} \right) \leq \frac{2}{p} ~,
\end{align*}
which completes the proof. \qed

In the light of Lemma \ref{lem:kendall_mat} and \ref{lem:kendall_vec}, it becomes clear that $\frac{\widetilde{\bX}^T \widetilde{\bX}}{n}$ and $\frac{\widetilde{\bX}^T \y}{n}$ in \eqref{eq:orc_dantzig} are replaced by $\widehat{\mathbf{\Sigma}}$ and $\hat{\boldsymbol{\beta}}$ in \eqref{eq:kds}. The population counterpart of $\widehat{\mathbf{\Sigma}}$ is $\mathbf{\Sigma} = \tilde{\mathbf{\Sigma}} = \E [\frac{\widetilde{\bX}^T \widetilde{\bX}}{n}]$. Unfortunately, the population version $\boldsymbol{\beta}$ of $\hat{\boldsymbol{\beta}}$ is not equal to $\tilde{\boldsymbol{\beta}} = \E [\frac{\widetilde{\bX}^T \y}{n} ]$, which is additionally normalized by $\sigma_y$. Therefore we will see later that KDS recovers a scaled $\tilde{\Th}$.

In order to bound the estimation error, first we show that the transformed sample Kendall's tau correlation matrix $\widehat{\mathbf{\Sigma}}$ satisfies the \emph{restricted eigenvalue} (RE) condition \cite{birt09,zhou09,rawy10,nrwy12,bcfs14}, which is critical in the recovery analysis.
\begin{lemm}
\label{kendall_re}
Define the descent cone for any $s$-sparse vector $\Th^* \in \R^p$,
\beq
\label{des_cone}
\cC = \left\{ \v \in \R^p \ | \ \ \|\Th^* + \v \|_1 \leq \|\Th^*\|_1 \right\} ~.
\eeq
If $\x \sim TE(\tilde{\mathbf{\Sigma}}, \xi, \f)$ and $n \geq \left(\frac{24 \pi }{\lambda_{\min}}\right)^2 s^2 \log p = O\left(s^2 \log p\right)$, with probability at least $1 - p^{-2.5}$, the following RE condition holds for $\widehat{\mathbf{\Sigma}}$ in $\cC$,
\beq
\inf_{\v \in \cC \cap \s^{p-1}} \v^T \widehat{\mathbf{\Sigma}} \v \geq  \frac{\lambda_{\min}}{2} ~,
\eeq
where $\lambda_{\min}$ is the smallest eigenvalue of $\tilde{\mathbf{\Sigma}}$.
\end{lemm}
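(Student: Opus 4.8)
The plan is to reduce the restricted-eigenvalue statement for the random matrix $\widehat{\mathbf{\Sigma}}$ to the trivial lower bound $\v^T \tilde{\mathbf{\Sigma}} \v \ge \lambda_{\min}$ for unit vectors, absorbing the discrepancy through the entrywise control of $\widehat{\mathbf{\Sigma}} - \tilde{\mathbf{\Sigma}}$ supplied by Lemma~\ref{lem:kendall_mat}. Concretely, for any $\v \in \cC \cap \s^{p-1}$ I would split $\v^T \widehat{\mathbf{\Sigma}} \v = \v^T \tilde{\mathbf{\Sigma}} \v + \v^T(\widehat{\mathbf{\Sigma}} - \tilde{\mathbf{\Sigma}})\v$; the first term is at least $\lambda_{\min}\|\v\|_2^2 = \lambda_{\min}$, and the second is at least $-\|\widehat{\mathbf{\Sigma}} - \tilde{\mathbf{\Sigma}}\|_{\max}\,\|\v\|_1^2$ by the H\"older-type bound $|\v^T M \v| \le \|M\|_{\max}\|\v\|_1^2$.

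The one genuinely needed ingredient is an $\ell_1$-to-$\ell_2$ comparison valid on the cone: if $S = \supp(\Th^*)$ with $|S| \le s$, then membership in $\cC$, i.e. $\|\Th^* + \v\|_1 \le \|\Th^*\|_1$, yields after splitting the $\ell_1$ norms over $S$ and $S^c$ the usual cone inequality $\|\v_{S^c}\|_1 \le \|\v_S\|_1$, whence $\|\v\|_1 \le 2\|\v_S\|_1 \le 2\sqrt{s}\,\|\v_S\|_2 \le 2\sqrt{s}$ and so $\|\v\|_1^2 \le 4s$. Since this holds for the support of any $s$-sparse $\Th^*$, the final bound is in fact uniform over all such cones on a single event.

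Combining these on the event of Lemma~\ref{lem:kendall_mat}, which has probability at least $1 - p^{-2.5}$ and on which $\|\widehat{\mathbf{\Sigma}} - \tilde{\mathbf{\Sigma}}\|_{\max} \le 3\pi\sqrt{\log p/n}$, I get $\v^T \widehat{\mathbf{\Sigma}} \v \ge \lambda_{\min} - 12\pi s\sqrt{\log p/n}$ for every $\v \in \cC \cap \s^{p-1}$; the hypothesis $n \ge (24\pi/\lambda_{\min})^2 s^2\log p$ is precisely the condition making $12\pi s\sqrt{\log p/n} \le \lambda_{\min}/2$, which closes the argument. There is no deep obstacle here --- the result is essentially deterministic given Lemma~\ref{lem:kendall_mat} --- and the only points demanding care are matching the numerical constants so the stated threshold comes out exactly, and recognizing that using only the $\|\cdot\|_{\max}$ bound on $\widehat{\mathbf{\Sigma}} - \tilde{\mathbf{\Sigma}}$ together with $\|\v\|_1^2 \le 4s$ is what forces the $s^2$, rather than $s$, scaling of the required sample size.
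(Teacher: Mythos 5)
Your proposal is correct and follows essentially the same route as the paper's proof: decompose $\v^T \widehat{\mathbf{\Sigma}} \v$ into $\v^T \tilde{\mathbf{\Sigma}} \v$ plus an error term, bound the error by $\|\widehat{\mathbf{\Sigma}} - \tilde{\mathbf{\Sigma}}\|_{\max}\|\v\|_1^2 \leq 4s \cdot 3\pi\sqrt{\log p/n}$ using the cone inequality $\|\v_{\cS^c}\|_1 \leq \|\v_{\cS}\|_1$ and Lemma~\ref{lem:kendall_mat}, and check that the stated sample size makes this at most $\lambda_{\min}/2$. The constants and the probability bound match, and your side remark that the bound is uniform over all $s$-sparse $\Th^*$ on the single event of Lemma~\ref{lem:kendall_mat} is a harmless (correct) addition.
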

\begin{proof}
Let $\cS$ be the support of $\Th^*$, then we have
\begin{gather*}
\v \in \cC \cap \s^{p-1}  \ \ \Longrightarrow \ \ \|\Th^*_{\cS} + \v_{\cS} \|_1 + \|\v_{\cS^c}\|_1 \leq \|\Th^*\|_1 \\
\Longrightarrow \ \  \|\Th^*_{\cS}\|_1 - \|\v_{\cS} \|_1 + \|\v_{\cS^c}\|_1 \leq \|\Th^*\|_1 \ \ \Longrightarrow \\
\|\v_{\cS^c}\|_1 \leq \|\v_{\cS}\|_1  \Longrightarrow  \|\v\|_1 \leq 2 \|\v_{\cS}\|_1 \leq 2 \sqrt{s} \|\v_{\cS}\|_2 \leq 2 \sqrt{s}
\end{gather*}
With probability at least $1 - p^{-2.5}$, we have for any $\v \in \cC \cap \s^{p-1}$
\begin{align*}
\v^T \widehat{\mathbf{\Sigma}} \v &\geq \v^T \tilde{\mathbf{\Sigma}} \v - \left| \v^T \left(\widehat{\mathbf{\Sigma}} - \tilde{\mathbf{\Sigma}}\right) \v \right| \geq \lambda_{\min} - \left| \sum_{1 \leq i, j \leq p} v_i v_j \left(\hat{\sigma}_{ij} - \tilde{\sigma}_{ij} \right) \right| \\
&\geq \lambda_{\min} - \|\v\|_1^2 \left\| \widehat{\mathbf{\Sigma}} - \tilde{\mathbf{\Sigma}} \right\|_{\max} \geq \lambda_{\min} - 12 \pi  \sqrt{\frac{s^2 \log p}{n}} ~,
\end{align*}
where we use Lemma \ref{lem:kendall_mat} and the fact that $\|\v\|_1 \leq 2 \sqrt{s}$. Since we choose $n \geq \left(\frac{24 \pi }{\lambda_{\min}}\right)^2 s^2 \log p$, we have
\begin{align*}
\v^T \widehat{\mathbf{\Sigma}} \v \geq \lambda_{\min} - 12 \pi  \sqrt{\frac{s^2 \log p}{n}} \geq \lambda_{\min} - \frac{\lambda_{\min}}{2} = \frac{\lambda_{\min}}{2} ~,
\end{align*}
which completes the proof. \qed
\end{proof}

\textbf{Remark:} Similar proof steps appear in \cite{xuzo12} in the context of the analysis of rank-based neighborhood Dantzig selector, but the concept of the RE condition is not explicitly formulated. Later we will show a sharper sample complexity for RE condition in Theorem \ref{kendall_re_sharp}. Hence we single out Lemma \ref{kendall_re} here in order for a comparison.

From the analysis above, we see that the $O\left(s^2 \log p\right)$ sample complexity for RE condition of $\widehat{\mathbf{\Sigma}}$ is worse than that of $\frac{\tilde{\bX}^T \tilde{\bX}}{n}$, which is $O\left(s \log p\right)$ \cite{birt09,nrwy12}. Next we show that this sharper bound (see Theorem \ref{kendall_re_sharp}) can be obtained for $\widehat{\mathbf{\Sigma}}$ if the distribution of $\x$ further satisfies the \emph{sign sub-Gaussian condition} \cite{hali13}. This result may be of independent interest.

\begin{defn}[sign sub-Gaussian condition]
For a random variable $x$, the operator $\psi: \R \mapsto \R$ is defined as
\beq
\psi (x; \alpha, t_0) \triangleq \inf \left\{ c > 0: \E \exp \{t(x^\alpha - \E x^\alpha)\} \leq \exp(c t^2), \ \text{for} \ |t| < t_0 \right\} ~.
\eeq
The random vector $\x \in \R^p$ satisfies the sign sub-Gaussian condition iff
\beq
\sup_{\v \in \s^{p-1}} \psi \left( \left\langle \sign(\x - \x'), \v \right\rangle ; 2, t_0 \right) \leq \kappa \|\bT\|_2^2 ~,
\eeq
for a fixed constant $\kappa$ and a positive number $t_0 > 0$ such that $t_0 \kappa \|\bT\|_2^2$ is lower bounded by a fixed constant, where $\x'$ is an independent copy of $\x$ and $\bT$ is the population Kendall's tau correlation matrix defined in \eqref{pop_kendall_mat}.
\end{defn}
Detailed discussions on the sign sub-Gaussian condition can be found in \cite{hali13}, which is out of the scope of this paper. In particular, \cite{hali13} show that if sign sub-Gaussian condition for transelliptical $\x$, the $\widehat{\mathbf{\Sigma}}$ will converge with high probability to $\tilde{\mathbf{\Sigma}}$ at rate $O\left(\sqrt{\frac{s\log p}{n}}\right)$ in terms of \emph{restricted spectral norm},
\beq
\|\widehat{\mathbf{\Sigma}} - \tilde{\mathbf{\Sigma}}\|_{2,s} \triangleq \sup_{\substack{\v \in \s^{p-1} \\ \|\v\|_0 \leq s}} \left|\v^T (\widehat{\mathbf{\Sigma}} - \tilde{\mathbf{\Sigma}}) \v\right| = O\left(\sqrt{\frac{s \log p}{n}}\right) ~.
\eeq
Starting from this result, we show that with high probability the RE condition will hold for $\widehat{\mathbf{\Sigma}}$ with $O(s \log p)$ samples.
\begin{theo}
\label{kendall_re_sharp}
Let $\bX = [\x_1, \x_2, \ldots, \x_n]^T$ be i.i.d. samples of $\x \sim TE(\tilde{\mathbf{\Sigma}}, \xi, \f)$ for which the sign sub-Gaussian condition holds with constant $\kappa$. Define the constant
\begin{align*}
c_0 = \max \left\{\frac{320 \kappa \pi^4 \|\tilde{\mathbf{\Sigma}}\|_2^2}{\lambda^2_{\min}}, \frac{\pi^2}{\lambda_{\min}} \right\} ~,
\end{align*}
in which $\|\cdot\|_2$ denotes the spectral norm (i.e. the largest eigenvalue) and $\lambda_{\min}$ is the smallest eigenvalue $\tilde{\mathbf{\Sigma}}$. If $n \geq  \frac{128 c_0}{\lambda_{\min}} s \log p = O(s \log p)$, with probability at least $1 - \frac{2}{p} - \frac{1}{p^2}$, $\widehat{\mathbf{\Sigma}}$ satisfies the following RE condition,
\beq
\inf_{\v \in \cC \cap \s^{p-1}} \v^T \widehat{\mathbf{\Sigma}} \v \geq \frac{\lambda_{\min}}{2}   ~,
\eeq
where $\cC$ is defined in \eqref{des_cone}.
\end{theo}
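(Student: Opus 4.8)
The plan is to bootstrap the RE condition from the restricted spectral norm bound $\|\widehat{\mathbf{\Sigma}} - \tilde{\mathbf{\Sigma}}\|_{2,s'} = O(\sqrt{s' \log p / n})$ of \cite{hali13}, applied not at sparsity level $s$ but at an inflated level $s' = c\, s$ for a suitable constant $c$, exactly as in the standard argument that converts a restricted isometry/eigenvalue bound on sparse vectors to one on the whole descent cone $\cC$. First I would recall, as already derived in the proof of Lemma \ref{kendall_re}, that every $\v \in \cC \cap \s^{p-1}$ satisfies the cone constraint $\|\v_{\cS^c}\|_1 \leq \|\v_{\cS}\|_1$, hence $\|\v\|_1 \leq 2\sqrt{s}$. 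The key structural fact is then the following: any $\v$ with $\|\v\|_2 = 1$ and $\|\v\|_1 \leq 2\sqrt{s}$ can be decomposed (Maurey-type / Shor-style splitting, as in \cite{nrwy12,birt09}) as a convex-ish combination $\v = \sum_k \mu_k \u_k$ where each $\u_k$ is supported on at most $s' = O(s)$ coordinates and $\sum_k \|\u_k\|_2 \lesssim 1$, controlled by $\|\v\|_1/\sqrt{s'}$; more precisely one shows $\v^T \bM \v$ for a symmetric $\bM$ is controlled on this set by $\|\bM\|_{2,s'}$ times a factor depending on $\|\v\|_1^2/s'$. Choosing $s'$ a large enough constant multiple of $s$ makes this factor a small absolute constant.

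Concretely, the steps in order: (1) fix $s' = \lceil \rho s \rceil$ with $\rho$ an absolute constant to be pinned down; (2) invoke the $O(\sqrt{s'\log p/n})$ restricted-spectral-norm convergence of $\widehat{\mathbf{\Sigma}}$ to $\tilde{\mathbf{\Sigma}}$ from \cite{hali13} — tracking that its high-probability event holds with probability at least $1 - \frac{2}{p} - \frac{1}{p^2}$ and that the constant in front involves $\kappa \pi^4 \|\tilde{\mathbf{\Sigma}}\|_2^2$, which is where $c_0$ enters; (3) apply the sparse-to-cone lemma to get, on that event, $\sup_{\v \in \cC \cap \s^{p-1}} |\v^T(\widehat{\mathbf{\Sigma}} - \tilde{\mathbf{\Sigma}})\v| \leq C \sqrt{s \log p / n}$ for an explicit $C$ absorbing $\rho$ and the constant from step (2); (4) write $\v^T \widehat{\mathbf{\Sigma}} \v \geq \v^T \tilde{\mathbf{\Sigma}} \v - |\v^T(\widehat{\mathbf{\Sigma}} - \tilde{\mathbf{\Sigma}})\v| \geq \lambda_{\min} - C\sqrt{s\log p/n}$ (using $\v^T\tilde{\mathbf{\Sigma}}\v \geq \lambda_{\min}$ for unit $\v$ by (\textbf{A2})); (5) impose $n \geq \frac{128 c_0}{\lambda_{\min}} s \log p$ — with the factor $128 c_0$ chosen precisely so that $C\sqrt{s\log p/n} \leq \lambda_{\min}/2$ — and conclude $\v^T \widehat{\mathbf{\Sigma}} \v \geq \lambda_{\min}/2$.

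The main obstacle I anticipate is step (3): making the sparse-to-cone conversion fully rigorous with explicit constants. The restricted spectral norm $\|\cdot\|_{2,s'}$ controls quadratic forms only over $s'$-sparse unit vectors, whereas cone vectors have full support; bridging this requires either the shelling/decomposition lemma of \cite{nrwy12} (which bounds $\v^T\bM\v$ by $\|\bM\|_{2,2s'}(\|\v\|_2^2 + \|\v\|_1^2/s')$ for symmetric $\bM$) or a direct covering argument, and one must be careful that it applies to a non-PSD perturbation matrix $\widehat{\mathbf{\Sigma}}-\tilde{\mathbf{\Sigma}}$ (which it does, since the decomposition bounds the bilinear form by the restricted operator norm regardless of sign). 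Getting the numerology to land exactly on the stated constants $c_0$ and $\frac{128 c_0}{\lambda_{\min}}$ will require choosing $\rho$ and chasing the $\pi^4$, $\kappa$, and $\|\tilde{\mathbf{\Sigma}}\|_2^2$ factors through carefully; the two branches of the $\max$ defining $c_0$ presumably correspond to the two regimes in which either the $\kappa\pi^4\|\tilde{\mathbf{\Sigma}}\|_2^2/\lambda_{\min}^2$ term or the bare $\pi^2/\lambda_{\min}$ term dominates the required sample size, so the proof should split on which term achieves the max.
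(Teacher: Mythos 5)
Your proposal is correct in outline and shares the paper's skeleton (invoke the restricted-spectral-norm convergence of $\widehat{\mathbf{\Sigma}}$ to $\tilde{\mathbf{\Sigma}}$ from \cite{hali13} at an inflated sparsity level, then extend from sparse vectors to the cone $\cC$, then subtract the perturbation from $\lambda_{\min}$), but the cone-extension step is done by a genuinely different mechanism. You propose to bound the perturbation itself over the cone, via the deterministic shelling/decomposition bound $|\v^T \bM \v| \lesssim \|\bM\|_{2,s'}\bigl(\|\v\|_2^2 + \|\v\|_1^2/s'\bigr)$ applied to the symmetric (non-PSD) matrix $\bM = \widehat{\mathbf{\Sigma}} - \tilde{\mathbf{\Sigma}}$ with $s' = \rho s$ an absolute-constant multiple of $s$; this is valid, and your worry about non-PSD perturbations is indeed handled by that lemma. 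The paper instead proves a one-sided transfer result (its Lemma \ref{re_maurey}): assuming $\v^T\widehat{\mathbf{\Sigma}}\v \ge \mu$ on all $s_0$-sparse unit vectors, it lower-bounds $\u^T\widehat{\mathbf{\Sigma}}\u$ on $\cC \cap \s^{p-1}$ by $\mu - \frac{4s}{s_0-1}(1-\mu)$ via Maurey-style random sparsification ($\E[\z]=\u$ with $s_0$ i.i.d. single-coordinate picks), and this argument exploits a structural feature you do not use, namely $\hat{\sigma}_{ii}=1$ for the transformed Kendall's tau matrix; it also takes $s_0 = 16s/\lambda_{\min}$, which is not an absolute-constant multiple of $s$. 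The trade-off: the paper's route yields exactly the stated constants $c_0$ and $\frac{128 c_0}{\lambda_{\min}}$ (these are artifacts of the choice $s_0 = 16 s/\lambda_{\min}$ and $\alpha = 1/p$ in the Han--Liu bound, which is also where the probability $1-\frac{2}{p}-\frac{1}{p^2}$ comes from), whereas your route would not reproduce those precise constants — but since $s'$ stays an absolute-constant multiple of $s$, chasing your constants would in fact give a sample-size requirement scaling like $\kappa \pi^4 \|\tilde{\mathbf{\Sigma}}\|_2^2 s \log p / \lambda_{\min}^2$, i.e., one factor of $\lambda_{\min}$ better than the theorem's bound, at the price of a generic (larger absolute) shelling constant and without needing the unit-diagonal structure of $\widehat{\mathbf{\Sigma}}$. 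So state your own constant in place of $\frac{128 c_0}{\lambda_{\min}}$ rather than trying to force the numerology to match.
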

To prove Theorem \ref{kendall_re_sharp}, we first formally state below the convergence result for $\widehat{\mathbf{\Sigma}}$ and $\tilde{\mathbf{\Sigma}}$ in \cite{hali13}.
\begin{lemm}[Theorem 4.10 in \cite{hali13}]
\label{rate_res_spec}
Let $\bX = [\x_1, \x_2, \ldots, \x_n]^T$ be i.i.d. samples of $\x \sim TE(\tilde{\mathbf{\Sigma}}, \xi, \f)$ for which the sign sub-Gaussian condition holds with constant $\kappa$. With probability at least $1 - 2\alpha - \alpha^2$, $\widehat{\mathbf{\Sigma}}$ constructed from $\bX$ satisfies
\beq
\|\widehat{\mathbf{\Sigma}} - \tilde{\mathbf{\Sigma}}\|_{2,s_0} \leq \pi^2 \Bigg ( \frac{s_0 \log p}{n} +  2 \sqrt{2\kappa} \| \tilde{\mathbf{\Sigma}} \|_2 \sqrt{\frac{s_0\left(3 + \log (p/s_0)\right) + \log (1/\alpha)}{n}}\Bigg)~.
\eeq
\end{lemm}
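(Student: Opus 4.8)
The plan is to prove the restricted-spectral-norm deviation by three ingredients: a sign representation that turns the relevant quadratic form into a U-statistic, a Taylor linearization that disposes of the $\sin$ transform, and a covering argument over the $s_0$-sparse sphere. The starting point is that for any $\v$, by \eqref{eq:sample_kendall_mat},
\[
\v^T \widehat{\bT} \v \;=\; \frac{2}{n(n-1)} \sum_{1 \le k < k' \le n} \left\langle \sign(\x_k - \x_{k'}),\, \v \right\rangle^2 ,
\]
an order-two U-statistic whose kernel has mean $\v^T \bT \v$. This is exactly the object the sign sub-Gaussian condition controls: it bounds the sub-exponential parameter of $\langle \sign(\x-\x'),\v\rangle^2 - \E[\cdot]$ by $\kappa\|\bT\|_2^2$ uniformly over $\v \in \s^{p-1}$, so that $\langle\sign(\x-\x'),\v\rangle^2$ has variance proxy $2\kappa\|\bT\|_2^2$, i.e.\ sub-Gaussian scale $\sqrt{2\kappa}\,\|\bT\|_2$; since $\sigma_{ij}=\sin(\frac\pi2 t_{ij})$ is $\frac\pi2$-Lipschitz in $t_{ij}$ one has $\|\bT\|_2 \asymp \|\tilde{\mathbf{\Sigma}}\|_2$, which is how that factor enters the final bound.

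Next I would pass from $\widehat{\bT}$ to $\widehat{\mathbf{\Sigma}}$ through the entrywise map $t \mapsto \sin(\frac\pi2 t)$. Using Lemma \ref{lem:kendall_mat} ($\tilde{\sigma}_{ij}=\sigma_{ij}=\sin(\frac\pi2 t_{ij})$), a second-order expansion gives, entrywise,
\[
\hat{\sigma}_{ij} - \tilde{\sigma}_{ij} \;=\; \frac{\pi}{2}\cos\!\Big(\tfrac{\pi}{2}t_{ij}\Big)\big(\hat{t}_{ij}-t_{ij}\big) + r_{ij},\qquad |r_{ij}| \le \frac{\pi^2}{8}\big(\hat{t}_{ij}-t_{ij}\big)^2 .
\]
Summing against $v_iv_j$ splits $\v^T(\widehat{\mathbf{\Sigma}}-\tilde{\mathbf{\Sigma}})\v$ into a linear leading term, which is again a U-statistic (now with fixed cosine weights bounded by one, so still governed by the sign sub-Gaussian scale), and a remainder. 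The remainder is bounded deterministically by $\frac{\pi^2}{8}\|\v\|_1^2\,\|\widehat{\bT}-\bT\|_{\max}^2 \le \frac{\pi^2}{8}\,s_0\,\|\widehat{\bT}-\bT\|_{\max}^2$, since $\|\v\|_1 \le \sqrt{s_0}\,\|\v\|_2$ for $s_0$-sparse $\v$. Combining this with the entrywise bound $\|\widehat{\bT}-\bT\|_{\max} = O(\sqrt{\log p/n})$ (Hoeffding for U-statistics and a union bound over the $p^2$ entries, exactly as in the proof of Lemma \ref{lem:kendall_mat}) produces the first summand, of order $\pi^2 s_0\log p/n$, and costs a failure probability of $2\alpha$.

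For the leading term I would establish a Bernstein-type tail bound for a fixed $s_0$-sparse $\v$. Decomposing the order-two U-statistic into an average of $\lfloor n/2\rfloor$ independent blocks (Hoeffding's decomposition) and invoking the sub-exponential control $\kappa\|\bT\|_2^2$ yields a two-regime inequality $\P(\,\cdot\ge\tau\,)\le 2\exp\!\big(-c\,n\min\{\tau^2/(2\kappa\|\bT\|_2^2),\,t_0\tau\}\big)$. I would then discretize $\{\v\in\s^{p-1}:\|\v\|_0\le s_0\}$ by a constant-radius net $\cN$ with $\log|\cN| \le s_0\big(3+\log(p/s_0)\big)$, and transfer $\sup_\v$ to $\max_{\cN}$ up to an absolute factor. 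A union bound over $\cN$, with the per-vector tail set to give total failure $\alpha^2$ and $\tau$ chosen in the sub-Gaussian regime, delivers the second summand $2\sqrt{2\kappa}\,\|\tilde{\mathbf{\Sigma}}\|_2\sqrt{(s_0(3+\log(p/s_0))+\log(1/\alpha))/n}$. Adding the two failure events gives the stated probability $1-2\alpha-\alpha^2$.

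The hard part will be the concentration of the cosine-weighted leading U-statistic uniformly over the net. One must verify that the sub-exponential parameter of its kernel is still controlled by $\kappa\|\bT\|_2^2$ — the sign sub-Gaussian condition supplies this only for the unweighted direction $\langle\sign(\x-\x'),\v\rangle$, so passing to the weighted quadratic form $\sum_{ij}v_iv_j\cos(\frac\pi2 t_{ij})s_is_j$ requires absorbing the fixed weight matrix (whose spectral norm is comparable to $\|\tilde{\mathbf{\Sigma}}\|_2$). One also has to make the Hoeffding decomposition interact cleanly with the net and calibrate the net radius so that the metric entropy lands on exactly $s_0(3+\log(p/s_0))$ while the sub-Gaussian regime of the Bernstein bound dominates. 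Tracking these constants so as to reproduce the $\pi^2$ and $2\sqrt{2\kappa}\,\|\tilde{\mathbf{\Sigma}}\|_2$ factors — rather than merely the $\sqrt{s_0\log p/n}$ rate — is the delicate bookkeeping.
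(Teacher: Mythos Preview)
The paper does not supply a proof of this lemma: it is quoted as Theorem~4.10 of \cite{hali13} and invoked as a black box in the proof of Theorem~\ref{kendall_re_sharp}. There is therefore no in-paper argument to compare your proposal against.

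That said, your reconstruction is a faithful outline of the Han--Liu strategy. The three ingredients you name --- the U-statistic representation $\v^T\widehat{\bT}\v = \binom{n}{2}^{-1}\sum_{k<k'}\langle\sign(\x_k-\x_{k'}),\v\rangle^2$, the second-order Taylor linearization of $\sin(\tfrac{\pi}{2}\,\cdot)$ to pass from $\widehat{\bT}$ to $\widehat{\mathbf\Sigma}$, and the covering of the $s_0$-sparse unit sphere with log-cardinality $s_0(3+\log(p/s_0))$ --- are exactly the components of the original argument, and your allocation of the two summands (remainder $\to \pi^2 s_0\log p/n$; linear part plus net $\to$ the square-root term) and of the failure probabilities ($2\alpha$ from the entrywise Hoeffding bound, $\alpha^2$ from the union over the net) matches the structure in \cite{hali13}. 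The point you flag as the hard part is genuinely the crux there as well: after linearization the leading piece is a Hadamard-weighted quadratic form $\sum_{ij}v_iv_j\cos(\tfrac{\pi}{2}t_{ij})(\hat t_{ij}-t_{ij})$, and the sign sub-Gaussian hypothesis only directly controls the unweighted $\langle\sign(\x-\x'),\v\rangle^2$. Absorbing the bounded cosine weights (and replacing $\|\bT\|_2$ by $\|\tilde{\mathbf\Sigma}\|_2$ via the $\tfrac{\pi}{2}$-Lipschitz relation) is exactly the bookkeeping that fixes the constants; your sketch is honest in not pretending this step is automatic.
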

The next step for showing Theorem \ref{kendall_re_sharp} is to extend the RE condition on all $s_0$-sparse unit vectors ($s_0$ needs to be appropriately specified) to all unit vectors inside the targeted descent cone $\cal C$. Lemma \ref{re_maurey} accomplishes this goal.
\begin{lemm}
\label{re_maurey}
Given $\widehat{\mathbf{\Sigma}}$ constructed from $\bX$ whose rows are generated from $\x \sim TE(\tilde{\mathbf{\Sigma}}, \xi, \f)$, we assume that for every $s_0$-sparse unit vector $\v$, the condition $\v^T \hat{\mathbf{\Sigma}} \v \geq \mu$ is satisfied. Then we have for any $\u \in \cC \cap \s^{p-1}$,
\beq
\u^T \widehat{\mathbf{\Sigma}} \u \geq \mu - \frac{4 s}{s_0 - 1} \left ( 1 - \mu \right )	~.
\eeq
\end{lemm}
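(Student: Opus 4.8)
The plan is to run Maurey's empirical method to lift the restricted eigenvalue control from $s_0$-sparse unit vectors to the whole descent cone, using one structural feature of $\widehat{\mathbf{\Sigma}}$. First I would record that, almost surely, $\widehat{\mathbf{\Sigma}}$ has unit diagonal: for continuous marginals there are no ties, so $\hat t_{ii}=1$ and $\hat\sigma_{ii}=\sin(\tfrac{\pi}{2})=1$. Consequently $\bA := \bI - \widehat{\mathbf{\Sigma}}$ has zero diagonal, and the hypothesis reads $\v^{T}\bA\v = 1 - \v^{T}\widehat{\mathbf{\Sigma}}\v \le 1-\mu$ for every $s_0$-sparse unit vector $\v$. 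I would also recall from the proof of Lemma~\ref{kendall_re} that $\u\in\cC\cap\s^{p-1}$ forces $\|\u\|_{1}\le 2\sqrt{s}$; set $\rho := 2\sqrt{s}$.

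Next I would set up the sparsification. Let $Z$ be the random vector equal to $\rho\,\sgn(u_{i})\,\e_{i}$ with probability $|u_{i}|/\rho$ for each coordinate $i$ (and $Z=\mathbf{0}$ with the leftover mass when $\|\u\|_{1}<\rho$), so that $\E[Z]=\u$, $\|Z\|_{2}\le\rho$ pointwise, and $\E\|Z\|_{2}^{2}\le\rho^{2}$. Draw i.i.d. copies $Z_{1},\dots,Z_{s_0}$ and put $\bar Z := \tfrac{1}{s_0}\sum_{k=1}^{s_0} Z_{k}$; this vector is supported on at most $s_0$ coordinates, hence is $s_0$-sparse, so the hypothesis applies to $\bar Z/\|\bar Z\|_{2}$ and gives $\bar Z^{T}\bA\,\bar Z \le (1-\mu)\|\bar Z\|_{2}^{2}$ pointwise (using $\mu\le 1$, and trivially when $\bar Z=\mathbf{0}$).

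The two moment identities then finish it. Expanding $\bar Z^{T}\bA\,\bar Z=\tfrac{1}{s_0^{2}}\sum_{k,l}Z_{k}^{T}\bA Z_{l}$, the diagonal terms $Z_{k}^{T}\bA Z_{k}$ all vanish because $\bA$ has zero diagonal, and independence of $Z_{k},Z_{l}$ for $k\ne l$ gives $\E[\bar Z^{T}\bA\,\bar Z]=\tfrac{s_0-1}{s_0}\,\u^{T}\bA\u$; similarly $\E\|\bar Z\|_{2}^{2}=\tfrac{1}{s_0}\E\|Z\|_{2}^{2}+\tfrac{s_0-1}{s_0}\|\u\|_{2}^{2}\le\tfrac{4s+s_0-1}{s_0}$. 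Combining,
\[
\u^{T}\bA\u=\frac{s_0}{s_0-1}\,\E[\bar Z^{T}\bA\,\bar Z]\le\frac{s_0}{s_0-1}(1-\mu)\,\frac{4s+s_0-1}{s_0}=(1-\mu)\Big(1+\frac{4s}{s_0-1}\Big),
\]
and since $\|\u\|_{2}=1$, $\u^{T}\widehat{\mathbf{\Sigma}}\u=1-\u^{T}\bA\u\ge\mu-\tfrac{4s}{s_0-1}(1-\mu)$, as claimed.

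The second-moment computations are routine; the step that actually does the work — and the one to be careful about — is the zero-diagonal observation, since that is exactly what kills the self-interaction terms $Z_{k}^{T}\bA Z_{k}$ and produces the sharp $1/(s_0-1)$ factor instead of a vacuous $O(\rho^{2}/s_0)$ bound weighted by an operator norm of $\bA$; one also needs $s_0\ge 2$ and $\mu\le 1$ for the statement to be meaningful, and the harmless case $\bar Z=\mathbf{0}$ must be noted.
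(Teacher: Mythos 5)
Your proof is correct and is essentially the paper's own argument: the same Maurey-type empirical sparsification ($\E[\bar{\z}]=\u$, $s_0$ i.i.d.\ copies, second-moment identities exploiting $\hat{\sigma}_{ii}=1$ and $\|\u\|_1\le 2\sqrt{s}$ on the cone). Rewriting things through $\bA=\bI-\widehat{\mathbf{\Sigma}}$ and putting an atom at zero so the sampling scale is $2\sqrt{s}$ rather than $\|\u\|_1$ are only cosmetic variations, and your explicit remarks about $\mu\le 1$, $s_0\ge 2$, and the $\bar Z=\mathbf{0}$ case are fine (indeed $\mu\le 1$ is automatic from the hypothesis applied to $\v=\e_i$).
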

\proof For any $\u \in \cC \cap \s^{p-1}$, let $\z \in \R^p$ be a random vector defined by
\beq
\P \left ( \z = \| \u\|_1 \sign(u_i) \cdot \e_i \right ) = \frac{|u_i|}{\|\u\|_1}	~,
\eeq
where $\{\e_i\}_{i=1}^p$ is the canonical basis of $\R^p$. Therefore, $\E[\z] = \u$. Let $\z_1, \z_2, \hdots, \z_{s_0}$ be independent copies of $\z$ and set $\bar{\z} = \frac{1}{s_0} \sum_{i=1}^{s_0} \z_i$. Therefore $\bar{\z}$ is an $s_0$-sparse vector, and by our assumption on quadratic forms on $s_0$-sparse vectors
\begin{align}
\bar{\z}^T \hat{\mathbf{\Sigma}} \bar{\z} \geq \mu \|\bar{\z}\|_2^2 \ \Longrightarrow \
\E\left[ \bar{\z}^T  \hat{\mathbf{\Sigma}} \bar{\z} \right] \geq \mu \E\left[\| \bar{\z} \|_2^2\right]~,
\label{eq:re_vc_dim2}
\end{align}
where the expectation is taken w.r.t $\bar{\z}$. Since $\bar{\z} = \frac{1}{s_0} \sum_{i=1}^{s_0} \z_i$, we have
\begin{align*}
\E\Big [ \bar{\z}^T  \hat{\mathbf{\Sigma}} \bar{\z} \Big] = \frac{1}{s_0^2} \sum_{1 \leq i,j \leq s_0} \E\left[ \z_i^T \hat{\mathbf{\Sigma}} \z_{j} \right] &= \frac{1}{s_0^2} \sum_{\substack{1 \leq i,j \leq s_0 \\ i \neq j}} \E\left[ \z_i^T \hat{\mathbf{\Sigma}} \z_{j} \right]
+ \frac{1}{s_0^2} \sum_{1 \leq i \leq s_0} \E \left[ \z_i^T \hat{\mathbf{\Sigma}} \z_{i} \right] \\
&= \frac{s_0(s_0 - 1)}{s_0^2} \u^T \hat{\mathbf{\Sigma}} \u + \frac{s_0}{s_0^2}  \sum_{i=1}^p \frac{|u_{i}|}{\|\u\|_1} \| \u\|_1^2 \hat{\sigma}_{ii} \\ 
&= \frac{s_0 - 1}{s_0} \u^T \hat{\mathbf{\Sigma}} \u + \frac{\|\u\|_1^2}{s_0} ~,
\end{align*}
since $\hat{\sigma}_{ii} = 1$, and $\sum_{i=1}^p \frac{|u_i|}{\| \u \|_1} = 1$.
Replacing $\hat{\mathbf{\Sigma}}$ in the above expression by the identity matrix $\mathbf{I} \in \R^{p \times p}$, we have
\begin{equation*}
\E\|\bar{\z}\|_2^2 = \frac{s_0-1}{s_0}  \|\u\|_2^2 + \frac{\|\u\|_1^2}{s_0}	~.
\end{equation*}
Plugging both these expressions back in \eqref{eq:re_vc_dim2}, we have
\begin{gather*}
\frac{s_0 - 1}{s_0} \u^T \hat{\mathbf{\Sigma}} \u + \frac{\|\u\|_1^2}{s_0} \geq \mu \frac{s_0-1}{s_0}  \|\u\|_2^2  + \mu \frac{\|\u\|_1^2}{s_0} \quad \Longrightarrow \\
 \u^T \hat{\mathbf{\Sigma}} \u  \geq \mu \| \u \|_2^2 - \frac{\| \u \|_1^2}{s_0 - 1} (1 - \mu)
\geq \mu - \frac{4s}{s_0 - 1} (1 - \mu) ~,
\end{gather*}
where we use the facts that $\|\u\|_2 = 1$ and $\|\u\|_1 \leq 2 \sqrt{s}$. That completes the proof. \qed

Note that Lemma \ref{re_maurey} is a deterministic result though the proof involves probabilistic argument. Equipped with Lemma \ref{rate_res_spec} and \ref{re_maurey}, we give the proof of Theorem \ref{kendall_re_sharp}.
\vspace{3mm}

\noindent{\itshape Proof of Theorem \ref{kendall_re_sharp}:}\hspace*{1em} For Lemma \ref{rate_res_spec}, we set $\alpha = \frac{1}{p}$, $s_0 = \frac{16s}{\lambda_{\min}}$, and let $c_0 = \max\{\frac{320 \kappa \pi^4 \|\tilde{\mathbf{\Sigma}}\|_2^2}{\lambda^2_{\min}}, \frac{\pi^2}{\lambda_{\min}}\}$. When $n \geq \frac{128 c_0}{\lambda_{\min}} s \log p = 8 c_0 s_0 \log p$, by Lemma \ref{rate_res_spec}, we have
\begin{align*}
 \|\widehat{\mathbf{\Sigma}} - \tilde{\mathbf{\Sigma}}\|_{2,s_0}  &\leq  \pi^2 \Bigg ( \frac{s_0 \log p}{n} + 2 \sqrt{2\kappa} \| \tilde{\mathbf{\Sigma}} \|_2 \sqrt{\frac{s_0(3 + \log (p/s_0)) + \log p}{n}} \Bigg ) \\
&\leq \pi^2 \Bigg ( \frac{s_0 \log p}{\frac{\pi^2}{\lambda_{\min}} \cdot 8 s_0 \log p} + 2 \sqrt{2\kappa} \| \tilde{\mathbf{\Sigma}} \|_2 \sqrt{\frac{s_0(3 + \log (p/s_0)) + \log p}{\frac{320 \kappa \pi^4  \|\tilde{\mathbf{\Sigma}}\|_2^2}{\lambda^2_{\min}} \cdot 8 s_0 \log p}} \Bigg ) \\
&\leq \pi^2 \left( \frac{\lambda_{\min}}{\pi^2} \sqrt{\frac{5 s_0 \log p}{320 s_0 \log p}} + \frac{\lambda_{\min}}{\pi^2} \frac{s_0 \log p}{8 s_0 \log p} \right) \\
&\leq \frac{\lambda_{\min}}{8} + \frac{\lambda_{\min}}{8} = \frac{\lambda_{\min}}{4} ~,
\end{align*}
with probability at least $1 - \frac{2}{p} - \frac{1}{p^2}$. It follows that for any $s_0$-sparse unit vector $\v$,
\begin{align*}
\v^T \widehat{\mathbf{\Sigma}} \v &\geq \v^T \tilde{\mathbf{\Sigma}} \v - \left| \v^T \left(\widehat{\mathbf{\Sigma}} - \tilde{\mathbf{\Sigma}}\right) \v \right| \geq \lambda_{\min} - \|\widehat{\mathbf{\Sigma}} - \tilde{\mathbf{\Sigma}}\|_{2, s_0} \geq \frac{3}{4} \lambda_{\min} ~,
\end{align*}
which satisfies the assumption in Lemma \ref{re_maurey} with $\mu = \frac{3}{4} \lambda_{\min}$. With the same $s_0 = \frac{16s}{\lambda_{\min}}$, by Lemma \ref{re_maurey}, we have for any $\v \in \cC \cap \s^{p-1}$,
\begin{align*}
\v^T \widehat{\mathbf{\Sigma}} \v & \geq \frac{3}{4} \lambda_{\min} - \frac{4 s}{\frac{16 s}{\lambda_{\min}} - 1} \left( 1 - \frac{3}{4} \lambda_{\min} \right)	 \\
&\geq \frac{3}{4} \lambda_{\min} - \frac{4 s}{\frac{16 s}{\lambda_{\min}} - 12 s} \left( 1 - \frac{3}{4} \lambda_{\min} \right) \\
&= \frac{3}{4} \lambda_{\min} - \frac{4 s}{\frac{16 s}{\lambda_{\min}} (1  - \frac{3}{4} \lambda_{\min})} \left( 1 - \frac{3}{4} \lambda_{\min} \right) \\
&= \frac{3}{4} \lambda_{\min} - \frac{\lambda_{\min}}{4} = \frac{\lambda_{\min}}{2} ~,
\end{align*}
which completes the proof. \qed

With RE condition satisfied by $\widehat{\mathbf{\Sigma}}$, we can proceed to the recovery guarantee for KDS. The next theorem relies on the RE condition described in Lemma \ref{kendall_re}, but we emphasize that if sign sub-Gaussian condition holds we can obtain similar result as long as $n$ attains the bound in Theorem \ref{kendall_re_sharp}, which is smaller than the one required in Lemma \ref{kendall_re}.
\begin{theo}
\label{theta_recovery}
For any $s$-sparse $\tilde{\Th}$, if we choose $\gamma_n = \frac{5 \pi}{\sqrt{\lambda_{\min}}} \sqrt{\frac{s \log p}{n }}$  and $n \geq \left(\frac{24 \pi}{\lambda_{\min}}\right)^2 s^2 \log p$
, with probability at least $1 - \frac{2}{p} - \frac{1}{p^{2.5}}$,  $\hat{\Th}$ given by \eqref{eq:kds} satisfies
\beq
\left\|\check{\Th} - \frac{\tilde{\Th}}{\sigma_y} \right\|_2 \leq \frac{40 \pi}{\lambda_{\min}^{3/2}} \sqrt{\frac{s^2 \log p}{n}} ~,
\eeq
\end{theo}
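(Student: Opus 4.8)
The plan is to run the standard Dantzig-selector error analysis with $\Th^* := \tilde{\Th}/\sigma_y$ in the role of the target vector. This is the right target because, by Lemma~\ref{lem:kendall_mat} and Lemma~\ref{lem:kendall_vec}, it exactly solves the population normal equation $\mathbf{\Sigma}\Th^* = \tilde{\mathbf{\Sigma}}\Th^* = \boldsymbol{\beta}$. The four ingredients are: (i) feasibility of $\Th^*$ for the program \eqref{eq:kds}; (ii) the cone containment $\check{\Th} - \Th^* \in \cC$ of \eqref{des_cone}; (iii) the RE condition of Lemma~\ref{kendall_re}; and (iv) an $\ell_1$--$\ell_\infty$ duality bound on the relevant quadratic form.

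First I would verify that $\Th^*$ is feasible with the stated $\gamma_n$. Decomposing $\widehat{\mathbf{\Sigma}}\Th^* - \hat{\boldsymbol{\beta}} = (\widehat{\mathbf{\Sigma}} - \tilde{\mathbf{\Sigma}})\Th^* + (\tilde{\mathbf{\Sigma}}\Th^* - \boldsymbol{\beta}) + (\boldsymbol{\beta} - \hat{\boldsymbol{\beta}})$, the middle term vanishes, so $\|\widehat{\mathbf{\Sigma}}\Th^* - \hat{\boldsymbol{\beta}}\|_\infty \le \|\widehat{\mathbf{\Sigma}} - \tilde{\mathbf{\Sigma}}\|_{\max}\,\|\Th^*\|_1 + \|\boldsymbol{\beta} - \hat{\boldsymbol{\beta}}\|_\infty$. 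The one genuinely new estimate needed here is $\|\Th^*\|_1 \le \sqrt{s/\lambda_{\min}}$: by $s$-sparsity $\|\tilde{\Th}\|_1 \le \sqrt{s}\,\|\tilde{\Th}\|_2$, while the law of total variance together with (\textbf{A3}) and $\cov[\tilde{\x}] = \tilde{\mathbf{\Sigma}}$ (from \eqref{eq:elliptical_prop} under (\textbf{A1})) gives $\sigma_y^2 \ge \mathrm{Var}(\langle\tilde{\Th},\tilde{\x}\rangle) = \tilde{\Th}^T\tilde{\mathbf{\Sigma}}\tilde{\Th} \ge \lambda_{\min}\|\tilde{\Th}\|_2^2$, hence $\|\Th^*\|_1 = \|\tilde{\Th}\|_1/\sigma_y \le \sqrt{s/\lambda_{\min}}$. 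Plugging in Lemma~\ref{lem:kendall_mat} and Lemma~\ref{lem:kendall_vec}, and absorbing the $\hat{\boldsymbol{\beta}}$-term using $\lambda_{\min} \le 1 \le s$, bounds the right-hand side by $(3\pi + 2\pi)\sqrt{s\log p/(n\lambda_{\min})} = \gamma_n$, so $\Th^*$ is feasible on the intersection of the events of those two lemmas.

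Given feasibility, $\check{\Th}$ minimizes $\|\cdot\|_1$ over a set containing $\Th^*$, so $\v := \check{\Th} - \Th^* \in \cC$ and therefore $\|\v\|_1 \le 2\sqrt{s}\,\|\v\|_2$, exactly as in the proof of Lemma~\ref{kendall_re}. I would then sandwich $\v^T\widehat{\mathbf{\Sigma}}\v$: from below, Lemma~\ref{kendall_re} (which holds on the event of Lemma~\ref{lem:kendall_mat} once $n \ge (24\pi/\lambda_{\min})^2 s^2\log p$) gives $\v^T\widehat{\mathbf{\Sigma}}\v \ge \tfrac{\lambda_{\min}}{2}\|\v\|_2^2$ by homogeneity; from above, writing $\v^T\widehat{\mathbf{\Sigma}}\v = \v^T(\widehat{\mathbf{\Sigma}}\check{\Th} - \hat{\boldsymbol{\beta}}) - \v^T(\widehat{\mathbf{\Sigma}}\Th^* - \hat{\boldsymbol{\beta}})$ and applying H\"older's inequality with the feasibility of both $\check{\Th}$ and $\Th^*$ gives $\v^T\widehat{\mathbf{\Sigma}}\v \le 2\gamma_n\|\v\|_1 \le 4\sqrt{s}\,\gamma_n\|\v\|_2$. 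Cancelling one factor of $\|\v\|_2$ yields $\|\v\|_2 \le 8\sqrt{s}\,\gamma_n/\lambda_{\min} = \tfrac{40\pi}{\lambda_{\min}^{3/2}}\sqrt{s^2\log p/n}$, the desired bound.

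Finally, a union bound over the failure of Lemma~\ref{lem:kendall_vec} (probability $\le 2/p$) and of the event behind Lemma~\ref{lem:kendall_mat} and Lemma~\ref{kendall_re} (probability $\le p^{-2.5}$) gives the stated confidence $1 - \tfrac{2}{p} - \tfrac{1}{p^{2.5}}$. I do not anticipate a serious obstacle; the only delicate point is the feasibility step, specifically the variance lower bound $\sigma_y^2 \ge \lambda_{\min}\|\tilde{\Th}\|_2^2$ that controls $\|\Th^*\|_1$ and thereby feeds the $\widehat{\mathbf{\Sigma}}$-perturbation into the calibration of $\gamma_n$, together with the constant bookkeeping so that everything lands on exactly the stated $\gamma_n$ and error bound.
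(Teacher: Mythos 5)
Your proposal is correct and follows essentially the same route as the paper's own proof: feasibility of $\Th^* = \tilde{\Th}/\sigma_y$ under the stated $\gamma_n$ via Lemmas~\ref{lem:kendall_mat} and~\ref{lem:kendall_vec}, cone membership of $\z = \check{\Th} - \Th^*$, the H\"older bound $\z^T\widehat{\mathbf{\Sigma}}\z \leq 2\gamma_n\|\z\|_1$, and the RE condition of Lemma~\ref{kendall_re}, arriving at the same constants and the same union bound. The only difference is that you explicitly justify $\|\Th^*\|_2 \leq 1/\sqrt{\lambda_{\min}}$ through the law of total variance ($\sigma_y^2 \geq \tilde{\Th}^T\tilde{\mathbf{\Sigma}}\tilde{\Th} \geq \lambda_{\min}\|\tilde{\Th}\|_2^2$), a step the paper uses implicitly in its feasibility calculation.
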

\begin{proof}
For the sake of convenience, we denote $\Th^* = \frac{\tilde{\Th}}{\sigma_y}$, and it is easy to see that $\tilde{\mathbf{\Sigma}} \Th^* = \boldsymbol{\beta}$.
We first show that $\Th^*$ is feasible when $\gamma_n = \frac{5 \pi}{\sqrt{\lambda_{\min}}} \sqrt{\frac{s \log p}{n}}$, by bounding the left-hand side of the constraint for $\Th^*$.
\begin{equation*}
\begin{split}
\Big \| \widehat{\mathbf{\Sigma}} \Th^* - \hat{\boldsymbol{\beta}} \Big \|_{\infty} &= \left\| \left( \widehat{\mathbf{\Sigma}} - \tilde{\mathbf{\Sigma}} \right) \Th^* - (\hat{\boldsymbol{\beta}} - \boldsymbol{\beta})  \right\|_{\infty} \\
&\leq \left\| \left( \widehat{\mathbf{\Sigma}} - \tilde{\mathbf{\Sigma}} \right) \Th^* \right\|_{\infty} + \left\| \hat{\boldsymbol{\beta}} - \boldsymbol{\beta}  \right\|_{\infty} \\
&\leq  \| \Th^* \|_1 \left\| \widehat{\mathbf{\Sigma}} - \tilde{\mathbf{\Sigma}}  \right\|_{\max} + 2 \pi \sqrt{\frac{\log p}{n}} \\
&\leq \sqrt{s} \cdot \| \Th^* \|_2  \left\| \widehat{\mathbf{\Sigma}} - \tilde{\mathbf{\Sigma}}  \right\|_{\max}  + 2 \pi \sqrt{\frac{\log p}{n}} \\
&\leq \frac{3 \pi}{\sqrt{\lambda_{\min}}} \sqrt{\frac{s \log p}{n }} + 2 \pi \sqrt{\frac{\log p}{n}} \leq \frac{5 \pi}{\sqrt{\lambda_{\min}}} \sqrt{\frac{s \log p}{n }} ~,
\end{split}
\end{equation*}
where we use Lemma \ref{lem:kendall_mat} and \ref{lem:kendall_vec}, and thus $\Th^*$ is feasible with probability $1 - \frac{2}{p} - \frac{1}{p^{2.5}}$ by union bound. On the other hand, since $\check{\Th}$ is optimal solution to \eqref{eq:kds}, it satisfies
\begin{align*}
 \|\check{\Th}\|_1 \leq \|\Th^*\|_1 \ \ \ \ \ \ \text{and} \ \ \ \ \ \ \left\| \widehat{\mathbf{\Sigma}} \check{\Th} - \hat{\boldsymbol{\beta}} \right\|_{\infty} \leq \gamma_n ~.
\end{align*}
Letting $\z = \check{\Th} - \Th^*$, we thus have
\begin{gather*}
\left\| \widehat{\mathbf{\Sigma}} \z \right\|_{\infty}  \leq \left\| \widehat{\mathbf{\Sigma}} \check{\Th} - \hat{\boldsymbol{\beta}} \right\|_{\infty} + \left\| \widehat{\mathbf{\Sigma}} \Th^* - \hat{\boldsymbol{\beta}} \right\|_{\infty} \leq 2 \gamma_n \ \ \ \Longrightarrow \\
\z^T  \widehat{\mathbf{\Sigma}} \z = \left\langle \z,  \widehat{\mathbf{\Sigma}} \z \right\rangle  \leq \|\z\|_1 \left\| \widehat{\mathbf{\Sigma}} \z \right\|_{\infty} \leq 2 \gamma_n \|\z\|_1
\end{gather*}
Using Lemma \ref{kendall_re} combined with the inequality above, with probability at least $1 - \frac{2}{p} - \frac{1}{p^{2.5}}$, we get
\begin{gather*}
\frac{\lambda_{\min}}{2} \|\z\|_2^2 \leq \z^T  \widehat{\mathbf{\Sigma}} \z \leq 2 \gamma_n \|\z\|_1 \ \ \ \ \Longrightarrow \ \ \ \ \|\z\|_2 \leq  \frac{4 \gamma_n}{\lambda_{\min}} \frac{\|\z\|_1}{\|\z\|_2} \leq  \frac{40 \pi}{\lambda_{\min}^{3/2}} \sqrt{\frac{s^2 \log p}{n}} ~,
\end{gather*}
where we use the fact that $\sup_{\z \in \cC} \frac{\|\z\|_1}{\|\z\|_2} \leq 2 \sqrt{s}$. \qed
\end{proof}

From the theorem above, though KDS only approximates a normalized version of $\tilde{\Th}$, the scale $\sigma_y$ can be estimated by computing the sample variance $\hat{\sigma}_y^2$ of $\y$, and the final estimate of $\tilde{\Th}$ is $\hat{\Th} = \hat{\sigma}_y \check{\Th}$ as shown in Algorithm \ref{alg:overview}. 

\subsection{Estimating $\cal F$}
After $\hat{\Th}$ is obtained, we can turn to the estimation of transformations $\cal F$. As we only have access to a finite number of samples $\{(\x_i, y_i)\}_{i=1}^n$, it is impossible to know the exact function. Hence we use the simple nearest-neighbor interpolation to approximate the $f_j$ as mentioned in \eqref{eq:interpolate}. By leveraging the monotonicity of $f_j$, we can estimate $\tilde{\bX}$ via solving the constrained least squares problem below,
\beq
\begin{gathered}
\label{lsq}
\hat{\bX} = \underset{\bZ \in \R^{n \times p}}{\argmin} \ \ \ell(\bZ) = \frac{1}{2} \|\bZ \hat{\Th} - \y\|_2^2 \ \ \ \ \text{s.t.} \ \ \ \ \z^j \in \cM(\x^j), \ \forall \ 1 \leq j \leq p ~,
\end{gathered}
\eeq
where the set $\cM(\x)$ denotes the \emph{monotone cone} induced by vector $\x$, i.e.,
\beq
\cM(\x) = \{ \v \ | \ v_i \geq v_j \ \text{iff} \ x_i \geq x_j, \ \forall \ 1 \leq i,j \leq p \} ~.
\eeq
The problem \eqref{lsq} is convex w.r.t. $\bZ$. Note that if $\hat{\Th} = \mathbf{1}$, the problem \eqref{lsq} is reduced to the estimation of $\cal F$ in AIM, which can be solved by the CPAV algorithm. Hence similar CPAV-type algorithm applies here, which is essentially a procedure of cyclic block coordinate descent (BCD) with exact minimization (i.e., minimizing $\ell(\bZ)$ w.r.t. each $\z^j$ cyclically while keeping other blocks fixed). In this scheme, each subproblem turns out to be an isotonic regression \cite{barl72}. To be specific, we let $\hat{\bX}_{(k)}$ be the iterate of the $k$-th round update, and define the residue for the $j$-th block as
\beq
\label{eq:residue}
\r^j_{(k)} = \y - \sum_{i < j} \hat{\theta}_i \hat{\x}^i_{(k)} - \sum_{i > j} \hat{\theta}_i \hat{\x}^i_{(k-1)} .
\eeq
Then each $\hat{\x}^j_{(k)}$ is obtained by solving
\beq
\label{lsqisoto}
\hat{\x}^j_{(k)} = \underset{\z^j \in \cM(\x^j)}{\argmin} \ \ \frac{1}{2} \Big \| \z^j   - \frac{\r^j_{(k)}}{\hat{\theta}_j} \Big \|_2^2 ~,
\eeq
which can be efficiently computed in $O(n)$ time using a skillful implementation of PAVA \cite{grwi84}. If we define for a set $\cA$ the projection operator as $P_\cA(\z) = \argmin_{\x \in \cA} \frac{1}{2} \|\x - \z\|_2^2$, the isotonic regression \eqref{lsqisoto} is simply the projection of $\r^j_{(k)} / \hat{\theta}_j$ onto the monotone cone $\cM(\x^j)$. Note that $\ell(\cdot)$ is a function of the design $\bZ$ instead of the coefficient vector $\hat{\Th}$. Though being convex, the problem \eqref{lsq} can have infinitely many solutions, some of which can be far from the original $\tilde{\bX}$. For example, given any $\hat{\bX}$, we can construct another optimum via shifting two columns $\hat{\x}^i$ and $\hat{\x}^j$ by $\mu_i$ and $\mu_j$ respectively, such that $\hat{\theta}_i \mu_i  + \hat{\theta}_j \mu_j = 0$. To avoid these ``bad'' solutions, we further impose on each $\hat{\x}^j$ the constraints $\mathbf{1}^T \hat{\x}^j = 0$ and $\|\hat{\x}^j\|_2 \leq \sqrt{n}$, as the marginal distribution of $\tilde{x}_{ij}$ is zero-mean and unit-variance. With additional constraints, the new problem is given by
\begin{gather}
\label{lsq_nrm}
\hat{\bX} \ = \ \underset{\bZ \in \R^{n\times p}}{\argmin} \ \ell(\bZ) \ \ \ \ \text{s.t.} \ \ \ \ \z^j \in \cM(\x^j), \ \mathbf{1}^T \z^j = 0, \ \|\z^j\|_2 \leq \sqrt{n}, \ \forall \ 1 \leq j \leq p ~, 
\end{gather}
and the subproblem for each block boils down to
\beq
\begin{gathered}
\label{lsqisoto_nrm}
\hat{\x}^j_{(k)} = \underset{\z^j \in \cM(\x^j)}{\argmin} \  \frac{1}{2} \Big \| \z^j   - \frac{\r^j_{(k)}}{\hat{\theta}_j} \Big \|_2^2  \ \ \ \ \text{s.t.} \ \ \ \  \mathbf{1}^T \z^j = 0, \  \|\z^j\|_2 \leq \sqrt{n} ,
\end{gathered}
\eeq
which we name \emph{standardized isotonic regression}. The solution to \eqref{lsqisoto_nrm} can be viewed as the projection onto the intersection of monotone cone $\cM(\x^i)$, hyperplane $\cL = \{ \z \ | \ \mathbf{1}^T \z = 0 \}$, and scaled $L_2$-norm ball $\cB = \{ \z \ | \ \|\z\|_2 \leq \sqrt{n} \}$. The next theorem show that the standardized isotonic regression is equivalent to the ordinary isotonic regression followed by successive projection on $\cL$ and $\cB$.
\begin{theo}
\label{sol_std_isoto}
Given any monotone cone $\cM$, the following equality holds
\beq
\label{proj}
P_{\cM \cap \cL \cap \cB}(\cdot) = P_{\cB}(P_{\cL}(P_{\cM}(\cdot))) ~,
\eeq
where $P_{\cL}(\z) = \z - \frac{\mathbf{1}^T \z}{n} \cdot \mathbf{1}$ and $P_{\cB}(\z) = \min \{ \frac{\sqrt{n}}{\|\z\|_2}, 1 \} \cdot \z$.
\end{theo}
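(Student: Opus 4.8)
The plan is to peel the two extra constraints off one at a time, using only three structural facts: $\cM$ is a nonempty closed convex cone whose lineality space contains the direction $\mathbf{1}$ (every monotone cone contains the whole line $\mathbb{R}\mathbf{1}$, and that line is in fact its lineality space); $\cL = \mathbf{1}^{\perp}$ is a linear hyperplane; and $\cB = \{\z : \|\z\|_2 \le \sqrt{n}\}$ is a Euclidean ball centred at the origin. Since $\mathbf{0}$ lies in all three sets, every projection below is onto a nonempty closed convex set, hence well defined and single-valued, and I will certify each claimed identity through the variational characterization that $p = P_C(\z)$ precisely when $p \in C$ and $\langle \z - p, \v - p\rangle \le 0$ for all $\v \in C$.

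\emph{Step 1: $P_{\cM \cap \cL} = P_{\cL} \circ P_{\cM}$.} Put $\w = P_{\cM}(\z)$. Since $\cM$ is a convex cone containing $\pm \mathbf{1}$, both $\w + \mathbf{1}$ and $\w - \mathbf{1}$ lie in $\cM$, so the optimality (normal cone) inequality at $\w$ forces $\langle \z - \w, \pm\mathbf{1}\rangle \le 0$; hence the residual $\z - \w$ is orthogonal to $\mathbf{1}$. Next, $P_{\cL}(\w) = \w - \tfrac{\mathbf{1}^{T}\w}{n}\mathbf{1}$ lies in $\cL$ by construction and in $\cM$ because it is the sum of $\w \in \cM$ and a scalar multiple of $\mathbf{1} \in \cM$ (a convex cone is closed under addition); thus $P_{\cL}(\w) \in \cM \cap \cL$. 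It then remains to verify the variational inequality for $P_{\cM \cap \cL}(\z)$ at the point $P_{\cL}(\w)$: for $\v \in \cM \cap \cL$ one expands $\langle \z - P_{\cL}(\w),\, \v - P_{\cL}(\w)\rangle$ into $\langle \z - \w, \v - \w\rangle$, which is $\le 0$ by optimality of $\w$ over $\cM$, plus cross terms in $\mathbf{1}$ that cancel because $\langle \z - \w, \mathbf{1}\rangle = 0$ and $\mathbf{1}^{T}\v = 0$. Hence $P_{\cM \cap \cL}(\z) = P_{\cL}(P_{\cM}(\z))$.

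\emph{Step 2: for any closed convex cone $K$, $P_{K \cap \cB} = P_{\cB} \circ P_{K}$.} Put $\w = P_{K}(\z)$ and recall the standard cone-projection identities $\langle \z - \w, \w\rangle = 0$ and $\langle \z - \w, \v\rangle \le 0$ for all $\v \in K$. If $\|\w\|_2 \le \sqrt{n}$, then $\w \in K \cap \cB \subseteq K$, so $\w$ is simultaneously $P_{K \cap \cB}(\z)$ and $P_{\cB}(\w)$ and there is nothing further to do. If $\|\w\|_2 > \sqrt{n}$, then $P_{\cB}(\w) = c\w$ with $c = \tfrac{\sqrt{n}}{\|\w\|_2} \in (0,1)$; this point lies in $K$ (it is a positive multiple of $\w \in K$) and on the sphere of radius $\sqrt{n}$, hence in $K \cap \cB$. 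For $\v \in K \cap \cB$, a short computation with the two identities reduces $\langle \z - c\w,\, \v - c\w\rangle$ to $(1-c)\bigl(\langle \w, \v\rangle - c\|\w\|_2^{2}\bigr)$, which is $\le 0$ since $\langle \w, \v\rangle \le \|\w\|_2\|\v\|_2 \le \|\w\|_2\sqrt{n} = c\|\w\|_2^{2}$ by Cauchy--Schwarz and $\|\v\|_2 \le \sqrt{n}$. So $c\w = P_{K \cap \cB}(\z)$, which proves the identity.

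\emph{Step 3: chain the two reductions.} Applying Step 2 with $K = \cM \cap \cL$ (a cone, as the intersection of two cones) and then Step 1 gives $P_{\cM \cap \cL \cap \cB}(\z) = P_{\cB}\bigl(P_{\cM \cap \cL}(\z)\bigr) = P_{\cB}\bigl(P_{\cL}(P_{\cM}(\z))\bigr)$, which is exactly \eqref{proj}; the stated formulas for $P_{\cL}$ and $P_{\cB}$ are just the elementary projections onto a hyperplane through the origin and onto a centred ball. The step that needs genuine care---and the one that forces the ordering of the two outer projections---is Step 1: it works precisely because $\mathbf{1}$ lies in the \emph{lineality} space of $\cM$, which is what makes $\z - P_{\cM}(\z) \perp \mathbf{1}$ and keeps $P_{\cM}(\z)$ inside $\cM$ under translation along $\mathbf{1}$; the analogous interchange fails for a generic hyperplane. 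Step 2 is the more routine fact that projection onto a cone commutes with radial shrinkage onto a centred ball, the only mild subtlety being the bookkeeping in the case $\|P_{K}(\z)\|_2 > \sqrt{n}$.
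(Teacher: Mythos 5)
Your proof is correct, but it takes a genuinely different route from the paper's. The paper argues via Lagrangian duality: it writes both the plain isotonic regression $P_{\cM}(\z)$ and the standardized one $P_{\cM \cap \cL \cap \cB}(\z)$ as saddle-point problems, observes that the dual problems in the ordering multipliers $\boldsymbol{\lambda}$ coincide, and then uses stationarity together with primal feasibility and complementary slackness in the extra multipliers $\beta, \gamma$ to read off that the standardized solution equals $(\x^* - \beta\mathbf{1})/(1-2\gamma)$, i.e.\ a centering followed by the correct radial shrinkage. You instead work purely with the variational (obtuse-angle) characterization of projections and split the claim into two composition lemmas: first $P_{\cM\cap\cL} = P_{\cL}\circ P_{\cM}$, which hinges on $\pm\mathbf{1}\in\cM$ (so the residual $\z - P_{\cM}(\z)$ is orthogonal to $\mathbf{1}$, and translating $P_{\cM}(\z)$ along $\mathbf{1}$ stays in $\cM$), and second the general fact $P_{K\cap\cB} = P_{\cB}\circ P_{K}$ for any closed convex cone $K$ and centred ball $\cB$, applied with $K = \cM\cap\cL$. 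Both arguments are sound; yours is more elementary and in places cleaner---it avoids the paper's implicit strong-duality/KKT bookkeeping, isolates exactly which structural facts (the lineality direction $\mathbf{1}$ of the monotone cone, the ball being centred) force the ordering $P_{\cB}\circ P_{\cL}\circ P_{\cM}$, and your Step 2 is a reusable statement in its own right. One cosmetic point: in Step 2 the inner product does not literally reduce to $(1-c)\bigl(\langle \w,\v\rangle - c\|\w\|_2^2\bigr)$; it equals that quantity plus $\langle \z-\w,\v\rangle$, which is $\le 0$ by the cone inequality, and that is what your bound actually uses, so the conclusion stands.
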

\proof It is easy to verify the the analytic expression for $P_{\cL}(\cdot)$ and $P_{\cB}(\cdot)$. To show \eqref{proj}, we let $\x^* = P_{\cM}(\z)$ and $\tilde{\x}^* = P_{\cM \cap \cL \cap \cB}(\z)$. We assume w.l.o.g. that the monotone cone is $\cM = \{\x \ | \ x_1 \geq x_2 \geq \ldots \geq x_n \}$.  By introducing the Lagrange multipliers $\boldsymbol{\lambda} = [\lambda_1, \ldots, \lambda_{n-1}]^T$, the isotonic regression $P_{\cM}(\z)$ can be casted as
\begin{gather*}
\max_{\boldsymbol{\lambda} \preceq \mathbf{0}} \min_{\x} \ \ g(\x, \boldsymbol{\lambda}) = \frac{1}{2} \|\x - \z\|_2^2  + \sum_{i=1}^{n-1} \lambda_i (x_i - x_{i+1}) ~,
\end{gather*}
where we use the strong duality. The optimum $\x^*$ has to satisfy the stationarity $\nabla_{\x} \ g(\x, \boldsymbol{\lambda}) = 0$, i.e.,
\beq
\label{station_iso}
\begin{gathered}
x_1^* - z_1 + \lambda_1 = 0 ~,\\
x_2^* - z_2 - \lambda_1 + \lambda_2 = 0 ~,\\
\vdots \\
x_{n-1}^* - z_{n-1} - \lambda_{n-2} + \lambda_{n-1} = 0 ~,\\
x_n^* - z_n - \lambda_{n-1} = 0 ~.
\end{gathered}
\eeq
Using \eqref{station_iso} to express $\x^*$ in terms of $\boldsymbol{\lambda}$, we denote $\min_{\x} g(\x, \boldsymbol{\lambda})$ by another function $h(\boldsymbol{\lambda})$, and the optimal dual variables $\boldsymbol{\lambda}^*$ satisfies
\begin{align*}
\boldsymbol{\lambda}^* = \underset{\boldsymbol{\lambda} \preceq \mathbf{0}}{\argmax} \ \ h(\boldsymbol{\lambda}) ~.
\end{align*}
For the standardized isotonic regression $P_{\cM \cap \cL \cap \cB}(\z)$, we can also introduce the Lagrange multipliers $\boldsymbol{\lambda} = [\lambda_1, \ldots, \lambda_{n-1}]^T$, $\beta$ and $\gamma$, and obtain the following optimization problem
\beq
\max_{\lambda \preceq \mathbf{0}, \gamma \leq 0, \beta} \min_{\x} \ \ \tilde{g}(\x, \boldsymbol{\lambda}, \beta, \gamma) = \frac{1}{2} \|\x - \z\|_2^2 + \sum_{i=1}^{n-1} \lambda_i (x_i - x_{i+1}) + \beta \sum_{i=1}^n x_i + \gamma (n - \|\x\|_2^2) ~.
\eeq
Again the optimum $\tilde{\x}^*$ has to satisfy $\nabla_{\x} \ \tilde{g}(\tilde{\x}^*, \boldsymbol{\lambda}, \beta, \gamma)$,
\beq
\label{station_iso_nrm}
\begin{gathered}
(1 - 2 \gamma) \tilde{x}_1^* - z_1 + \beta + \lambda_1 = 0 ~,\\
(1 - 2 \gamma) \tilde{x}_2^* - z_2 + \beta - \lambda_1 + \lambda_2 = 0 ~,\\
\vdots \\
(1 - 2 \gamma) \tilde{x}_{n-1}^* - z_{n-1} + \beta  - \lambda_{n-2} + \lambda_{n-1} = 0 ~,\\
(1 - 2 \gamma) \tilde{x}_n^* - z_n + \beta - \lambda_{n-1} = 0 ~.
\end{gathered}
\eeq
By substituting $\tilde{\x}^*$ for $\boldsymbol{\lambda}$, $\beta$ and $\gamma$, we have
\begin{align*}
\min_{\x} \tilde{g}(\x, \boldsymbol{\lambda}, \beta, \gamma) &= \frac{1 - 2 \gamma}{2} \sum_{i=1}^n \left( \tilde{x}_i^* - \frac{z_i - \beta}{1-2\gamma} \right)^2 + \sum_{i=1}^{n-1} \lambda_i (\tilde{x}_i^* - \tilde{x}_{i+1}^*) + \frac{\|\z\|_2^2}{2}  - \frac{\sum_{i=1}^n (z_i - \beta)^2}{2(1 - 2 \gamma)} + \gamma n \\
&= \frac{h(\boldsymbol{\lambda})}{1 - 2 \gamma} + \frac{\|\z\|_2^2}{2}  - \frac{\sum_{i=1}^n (z_i - \beta)^2}{2(1 - 2 \gamma)} + \gamma n ~, 
\end{align*}
in which we note that the last three terms are free of $\boldsymbol{\lambda}$. Hence the optimal $\boldsymbol{\lambda}$ for standardized isotonic regression,
\begin{align*}
\boldsymbol{\lambda}^* &= \underset{\boldsymbol{\lambda} \preceq \mathbf{0}}{\argmax} \ \ \frac{h(\boldsymbol{\lambda})}{1 - 2 \gamma} + \frac{\|\z\|_2^2}{2}  - \frac{\sum_{i=1}^n (z_i - \beta)^2}{2(1 - 2 \gamma)} + \gamma n  \\
&= \underset{\boldsymbol{\lambda}  \preceq \mathbf{0}}{\argmax} \ \ h(\boldsymbol{\lambda})
\end{align*}
is the same as the one for isotonic regression. Thus, combining \eqref{station_iso} and \eqref{station_iso_nrm}, we have
\beq
\label{sol_iso_nrm}
\tilde{\x}^* = \frac{\x^* - \beta \cdot \mathbf{1}}{1 - 2 \gamma} ~.
\eeq
On the other hand, by summing up the equations respectively in \eqref{station_iso} and \eqref{station_iso_nrm} and the primal feasibility $\sum_{i=1}^n \tilde{x}^*_i = 0$ we have
\begin{align*}
\sum_{i=1}^n x^*_i = \sum_{i=1}^n z_i, \ \  \sum_{i=1}^n z_i =  n \beta \ \ \ \Longrightarrow \ \ \ \beta = \frac{\mathbf{1}^T {\x^*} }{n} ~,
\end{align*}
which implies that
\beq
\x^* - \beta \cdot \mathbf{1} = P_{\cL} (\x^*) = P_{\cL}(P_{\cM}(\z)) ~.
\eeq
Denoting $\x^* - \beta \cdot \mathbf{1}$ by $\hat{\x}^*$, we now show that scaling $\hat{\x}^*$ by $\frac{1}{1- 2\gamma}$ is exactly the projection onto $\cB$.
If $\|\hat{\x}^*\|_2 > \sqrt{n}$, then $\gamma < 0$ due to \eqref{sol_iso_nrm} and primal feasibility $\|\tilde{\x}^*\|_2 \leq \sqrt{n}$. By complementary slackness $\gamma (n - \|\tilde{\x}^*\|_2^2) = 0$, we have $\|\tilde{\x}^*\|_2 = \sqrt{n}$. If $\|\hat{\x}^*\|_2 < \sqrt{n}$, then $\|\tilde{\x}^*\| < \sqrt{n}$ due to \eqref{sol_iso_nrm} and dual feasibility $\gamma \leq 0$. It follows from complementary slackness that $\gamma = 0$, which result in $\tilde{\x}^* = \hat{\x}^*$. If $\|\hat{\x}^*\|_2 = \sqrt{n}$, by similar argument, we have $\tilde{\x}^* = \hat{\x}^*$ as well. In a word, we have
\begin{align*}
\tilde{\x}^* = \left\{
             \begin{array}{lll}
              \hat{\x}^*, &\text{if \ $\|\hat{\x}^*\|_2 \leq \sqrt{n}$} \\
               \frac{\sqrt{n} }{\|\hat{\x}^*\|_2} \hat{\x}^*,  &\text{if \ $\|\hat{\x}^*\|_2 > \sqrt{n}$}
             \end{array}  \right. ~,
\end{align*}
which matches the expression for $P_{\cB}(\cdot)$. We complete the proof by noting $\tilde{\x}^* = P_{\cB}(\hat{\x}^*) = P_{\cB}(P_{\cL}(P_{\cM}(\z)))$. \qed

Theorem \ref{sol_std_isoto} indicates that the extra cost for each subproblem of our CPAV algorithm is very minimal, since the projection onto $\cal L$ and $\cal B$ can be done in linear time. Note that the CPAV for AIM needs to work with $p$ blocks of variables, and pre-specifying the monotonicity for each $f_j$ could lead to as many as $2^p$ different combinations, which is computationally prohibitive. In contrast, our algorithm only deals with roughly $O(s)$ blocks and need not specify the monotonicity. The details of our CPAV is given Algorithm \ref{alg:cpav}. For $\hat{\theta}_j = 0$, the corresponding $f_j$ will have no contribution to the estimated SLIM, which is thus skipped in our CPAV.
The convergence of Algorithm \ref{alg:cpav} basically follows from the extensive studies on cyclic BCD type algorithms \cite{luts92,tsen01,bete13}. Recently \cite{suho15} show that the convergence rate of BCD with exact minimization achieves $O(1 / t)$ for a family of quadratic nonsmooth problem without linear dependency on the number of blocks, which applies to Algorithm \ref{alg:cpav} for solving \eqref{lsq_nrm}.
\begin{algorithm}[h]
\renewcommand{\algorithmicrequire}{\textbf{Input:}}
\renewcommand{\algorithmicensure} {\textbf{Output:}}
\caption{Estimating $\tilde{\bX}$}
\label{alg:cpav}
\begin{algorithmic}[1]
\REQUIRE Data $\y \in \R^n$, $\bX \in \R^{n \times p}$, estimated $\hat{\Th}$, number of round $t$  \ \\
\ENSURE Estimated hidden design $\hat{\bX}$ \\
\STATE Initialize $\hat{\bX}_{(0)} = \mathbf{0}_{n\times p}$
\FOR {$k$:= $1, 2, \ldots, t$}
\FOR {$j$:= $1, 2, \ldots, p$}
\IF {$\hat{\theta}_j \neq 0$}
\STATE Compute $\r^j_{(k)}$ using \eqref{eq:residue}
\STATE Compute $\z^j_{(k)}$ = $P_{\cM(\x^j)}\left(\frac{\r^j_{(k)}}{\hat{\theta}_j}\right)$ using PAVA
\STATE $\hat{\x}^j_{(k)}$ := $P_{\cB}(P_{\cL}(\z^j_{(k)}))$
\ENDIF
\ENDFOR
\ENDFOR
\STATE \textbf{Return} \ $\hat{\bX} = \hat{\bX}_{(t)}$
\end{algorithmic}
\end{algorithm}

\section{Experimental Results}
\label{sec:exp}
In this section, we show some experimental evidence for the effectiveness of SLIM. We test our estimation algorithm on the synthetic data. Specifically we fix the problem dimension $p = 500$, the sparsity level of $\tilde{\Th}$, $s = 10$. The distribution of $\x$ is chosen as $NPN(\tilde{\mathbf{\Sigma}}, \f)$, and $y \sim \langle \tilde{\Th}, \tilde{\x} \rangle + \cN(0,0.25)$. The covariance matrix is given by $\tilde{\mathbf{\Sigma}} = \bA \bA^T$, where $\bA$ is a Gaussian random matrix with normalized rows. In data preparation, we first generate $\tilde{\x}$ from $\cN(\mathbf{0}, \tilde{\mathbf{\Sigma}})$. For the ten $\tilde{x}_j$'s whose corresponding $\tilde{\theta}_j$'s are nonzero, we then apply ten different monotonically increasing functions to obtain $x_j$'s, which are basically the inverse of $f_j$'s. The ten inverse functions are summarized in the table below.
\begin{table}[h]
\centering
\begin{tabular}{|c|c|}
  \hline
  $f_1^{-1}(x) = x^3$ & $f_6^{-1}(x) = x\log (|x|+1)$ \\
  \hline
   $f_2^{-1}(x) = \sign(x) \sqrt{|x|}$  & $f_7^{-1}(x) = 1 / (1 + \exp(-x))$  \\
  \hline
  $f_3^{-1}(x) = \exp(x)$  & $f_8^{-1}(x) = x - 1$  \\
  \hline
  $f_4^{-1}(x) = \Phi(x)$  & $f_9^{-1}(x) = \sign(x)  \log(|x|+1)$ \\
  \hline
   $f_{5}^{-1}(x) = x \exp(\sqrt{|x|})$  & $f_{10}^{-1}(x) = \log(\exp(x)+1)$ \\
  \hline
\end{tabular}
\vspace{-1mm}
\caption{Inverse of $f_j$ for nonzero $\tilde{\theta}_j$}
\end{table}
The $\Phi(\cdot)$ in $f_4^{-1}$ is the CDF of standard norm distribution. For the rest of $\tilde{x}_j$, we randomly apply one of the functions above. All the results are obtained based on the average over 100 trials.

We plot in Figure \ref{fig:1} the normalized estimation error of $\tilde{\Th}$ and $\tilde{\bX}$, $\frac{\|\tilde{\Th} - \hat{\Th}\|_2}{\|\tilde{\Th}\|_2}$ and $\frac{\|\tilde{\bX} - \hat{\bX}\|_2}{\|\tilde{\bX}\|_2}$. As sample size $n$ increases from 100 to 500, we can see the clear decreasing trend of error. We also compare the prediction error of SLIM with the simple linear model on 200 new data points, which is shown in Figure \ref{fig:2}. The best tuning parameters for both methods are picked up via grid search. The simple linear model fails to capture the nonlinear correlation between $\x$ and $y$, thus incurring large prediction errors. In contrast, SLIM better fits the data and has substantially smaller errors. In Figure \ref{fig:3}, we specifically plot the prediction errors along the parameter-tuning paths when $n = 500$, and see that SLIM always outperforms the linear model (The actual parameters are different for both methods, but we keep the largest as $2^9$ times the smallest). In Figure \ref{func_rec}, we also provide the plots for $f_1, f_2, \cdot, f_{10}$ and the corresponding estimated ones at the observed $x_1, x_2, \ldots, x_{10}$. It is not difficult to see that the red dots well capture the shape of the function plots except for some tails.
\begin{figure*}[hbt!]
\centering
\subfigure[Estimation error vs. sample size]{
\label{fig:1}
    \includegraphics[width=0.312\textwidth]{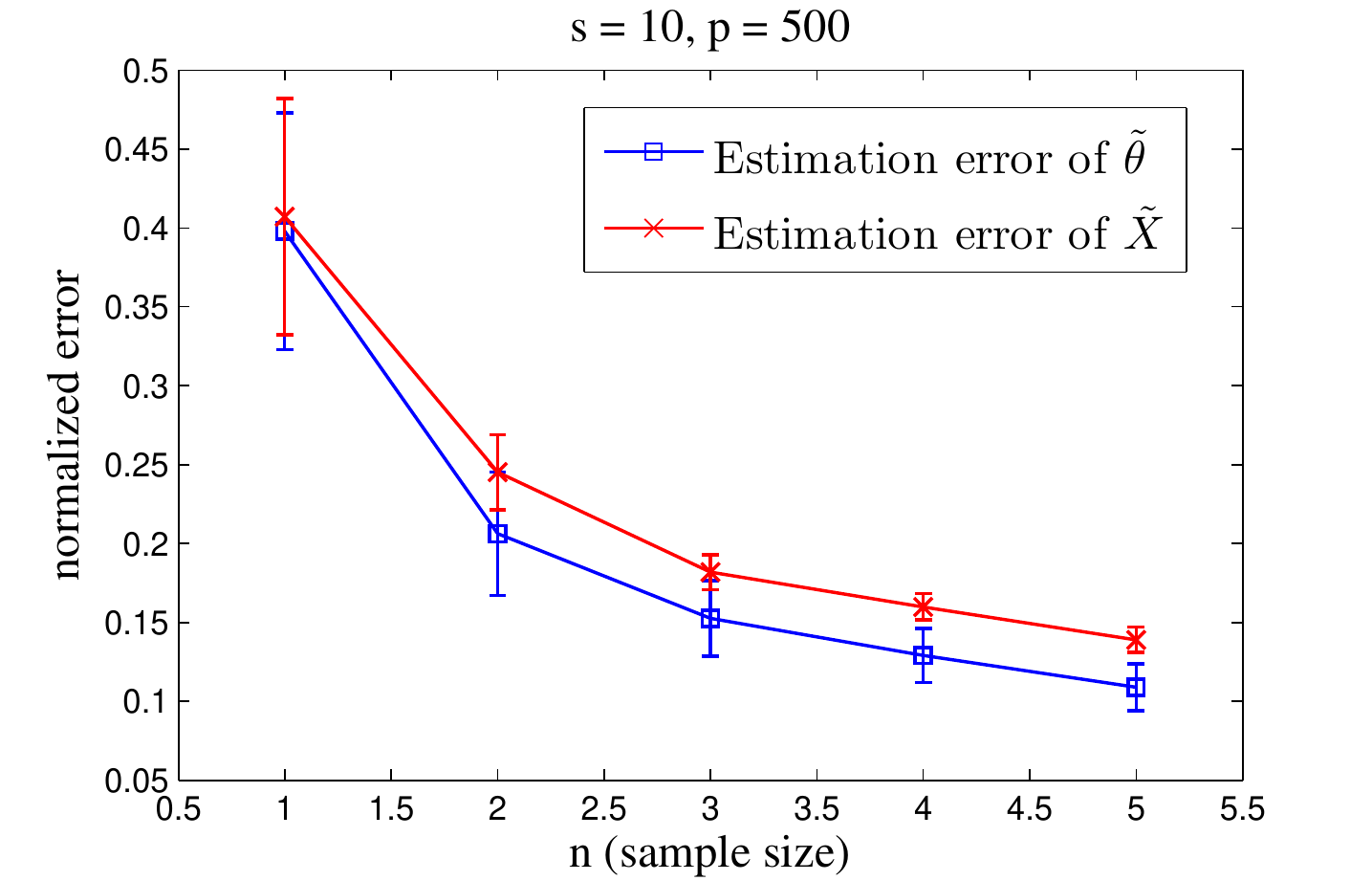}}%
\hspace{.1in}
\subfigure[Prediction error vs. sample size]{
\label{fig:2}
    \includegraphics[width=0.312\textwidth]{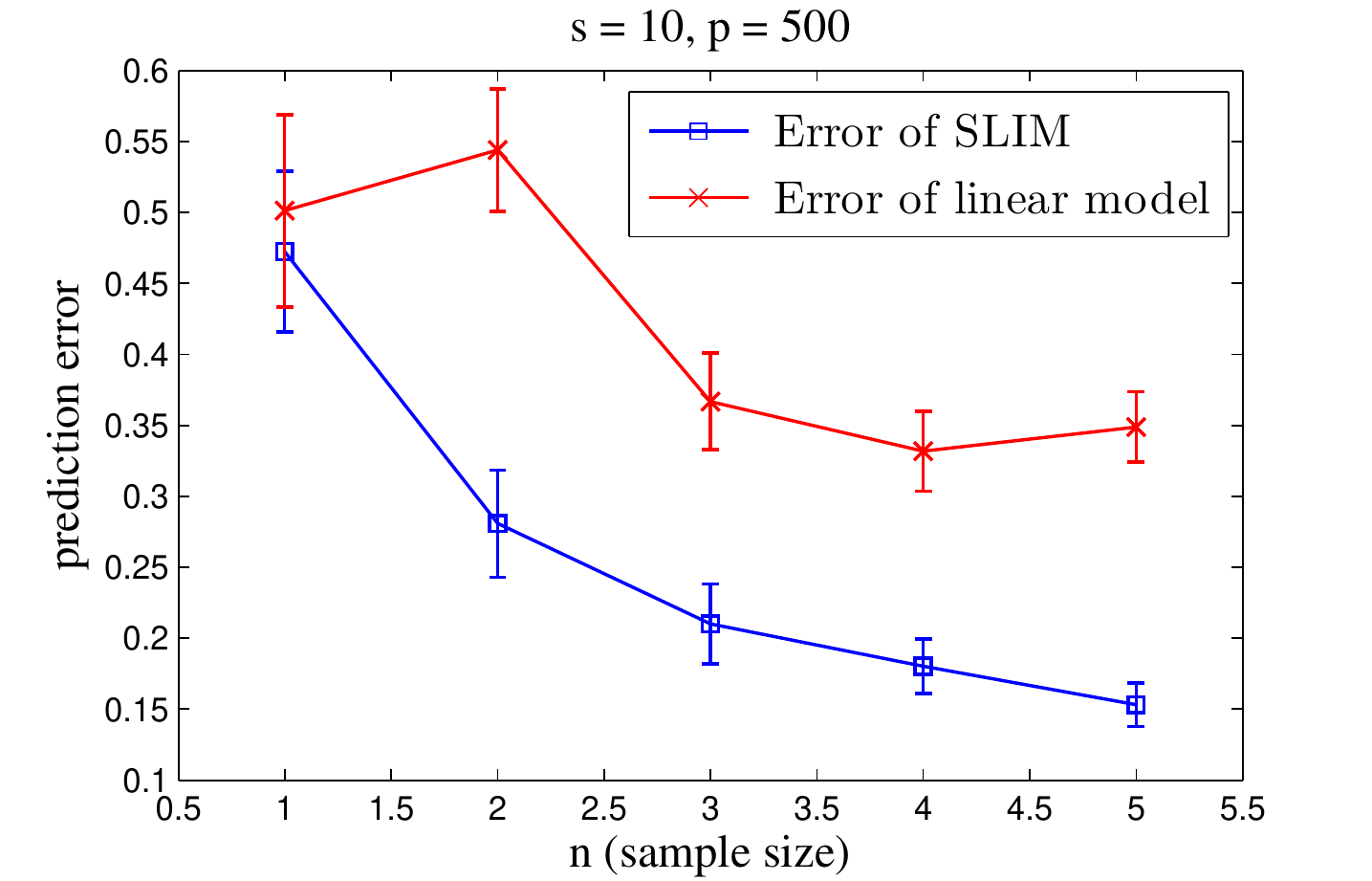}}%
\hspace{.1in}
\subfigure[Prediction error vs. tuning parameter]{
\label{fig:3}
    \includegraphics[width=0.312\textwidth]{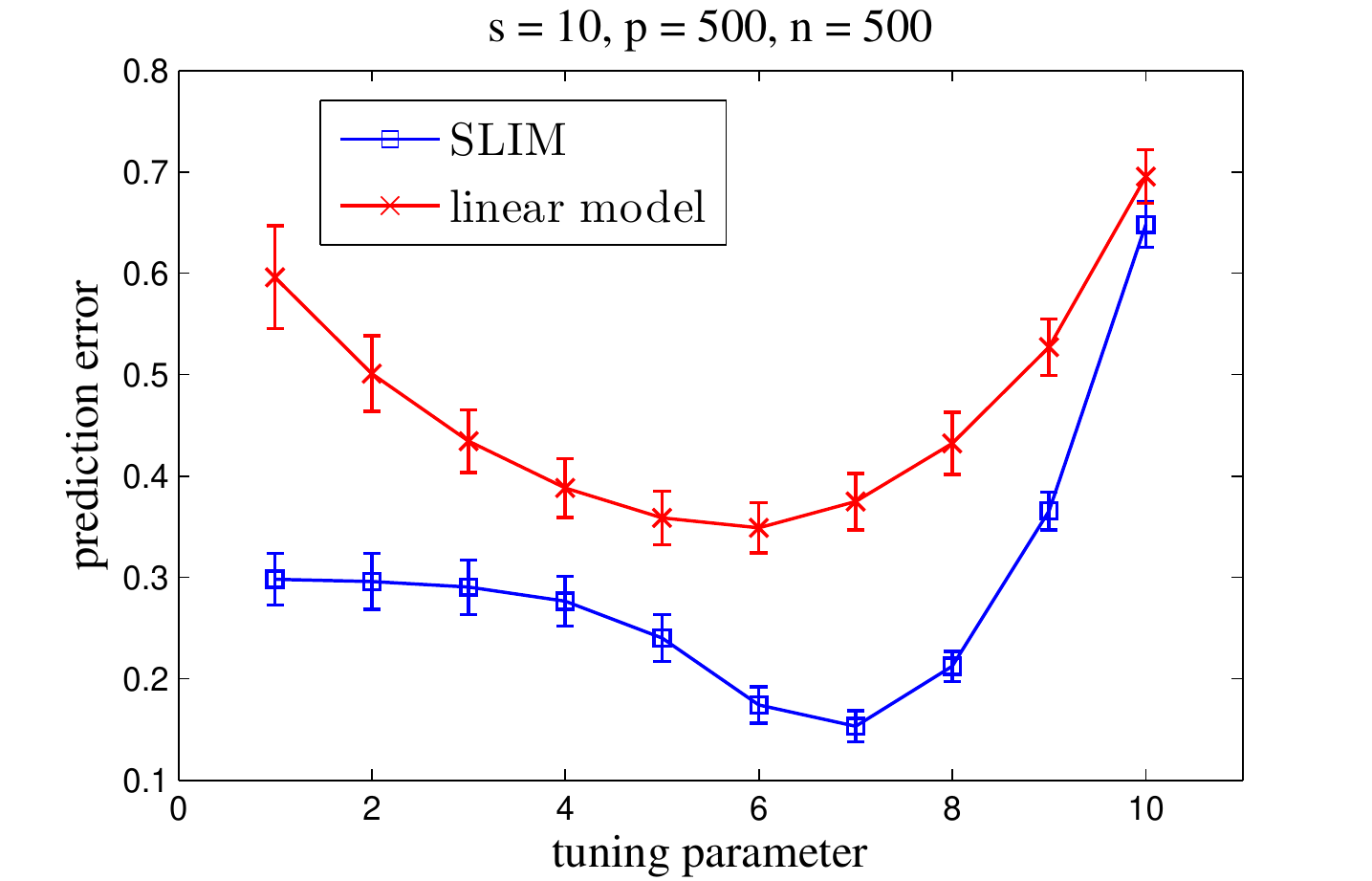}}%
\vspace{-3mm}
\caption{Experimental results for SLIM}
\label{error_vs_n_sub}
\vspace*{-3mm}
\end{figure*}
\begin{figure*}[hbt!]
\centering
\hspace*{-2cm}	
\includegraphics[width=1.21\linewidth]{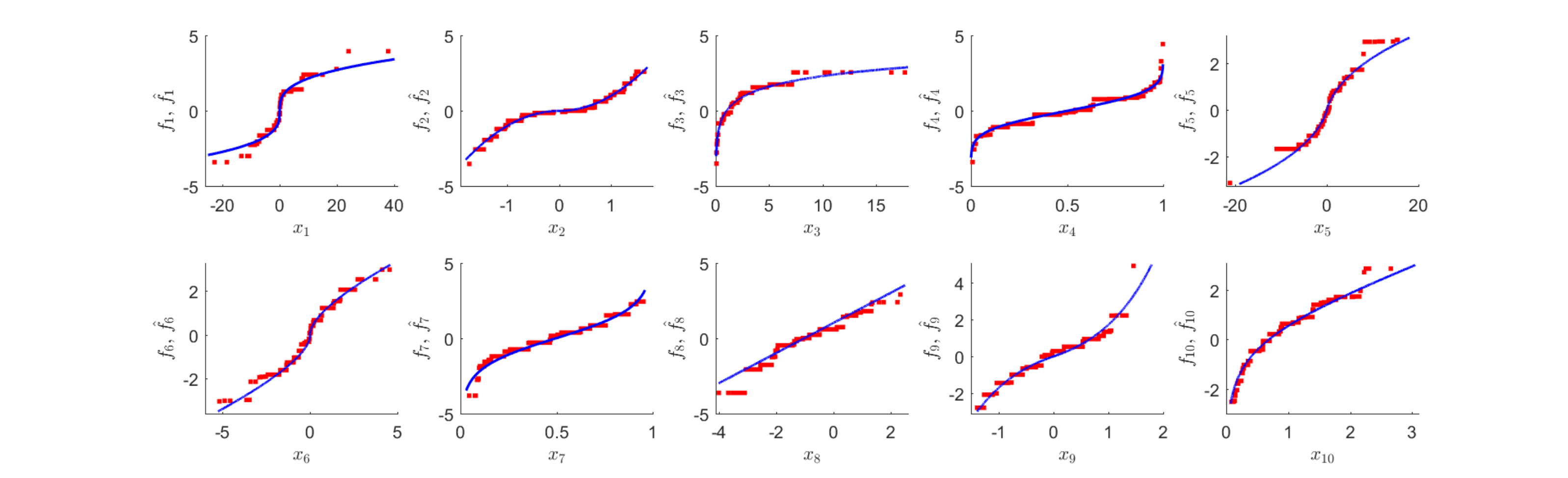}
    \vspace{-8mm}
	\caption{Function $f_j$ (blue curves) and the corresponding $\hat{f}_j$ at observed $x_j$ (red dots) ($n=500$)}
	 \label{func_rec}
\end{figure*}
\vspace{-2mm}

\section{Conclusions}
\label{sec:conc}
In this paper, we propose the sparse linear isotonic models (SLIMs) together with a two-step estimation algorithm, which aims to uncover the underlying linear models, when we only get access to the monotonically transformed values of the predictors. Our model enjoys a few advantages over the classical additive isotonic models (AIMs). In high-dimensional setting, the proposed Kendall's tau Dantzig selector can provably recover the sparse parameters under suitable statistical assumptions. Especially we can obtain a sharper sample complexity than previous analysis when the so-called sign sub-Gaussian condition holds. On the optimization side, we show that as the subproblem in our backfitting algorithm for estimating monotone transformations, the standardized isotonic regression can be solved as efficiently as the ordinary isotonic regression. Some empirical evidences also demonstrate the effectiveness of SLIM.  

\vspace*{3mm}
{\bf Acknowledgements:} The research was supported by NSF grants IIS-1563950, IIS-1447566, IIS-1447574, IIS-1422557, CCF-1451986, CNS- 1314560, IIS-0953274, IIS-1029711, NASA grant NNX12AQ39A, and gifts from Adobe, IBM, and Yahoo.

\bibliography{ref}
\bibliographystyle{plain}

\end{document}